\def\eqref#1{equation~\ref{#1}}
\def\1{\bm{1}}
\def\vtheta{{\bm{\theta}}}
\def\va{{\bm{a}}}
\def\vg{{\bm{g}}}
\def\vu{{\bm{u}}}
\def\vx{{\bm{x}}}
\def\vy{{\bm{y}}}
\def\mH{{\bm{H}}}
\def\mI{{\bm{I}}}
\def\mL{{\bm{L}}}
\def\mU{{\bm{U}}}
\def\mSigma{{\bm{\Sigma}}}
\DeclareMathAlphabet{\mathsfit}{\encodingdefault}{\sfdefault}{m}{sl}
\SetMathAlphabet{\mathsfit}{bold}{\encodingdefault}{\sfdefault}{bx}{n}
\newcommand{\E}{\mathbb{E}}
\newcommand{\R}{\mathbb{R}}
\newcommand{\KL}{D_{\mathrm{KL}}}
\newcommand{\normltwo}{L^2}
\DeclareMathOperator*{\argmax}{arg\,max}
\DeclareMathOperator*{\argmin}{arg\,min}
\newtheorem{theorem}{Theorem}[section]
\newtheorem{lemma}[theorem]{Lemma}
\providecommand{\ie}{\emph{i.e.,} }
\providecommand{\eg}{\emph{e.g.,} }
\providecommand{\parab}[1]{\noindent\textbf{#1}}
\providecommand{\E}{\mathrm{E}}
\providecommand{\parab}[1]{\noindent\textbf{#1}}
\title{Projection-Based Constrained \\ Policy Optimization}
\author{Tsung-Yen Yang \\
Princeton University \\
\texttt{ty3@princeton.edu} \\
\And
Justinian Rosca \\
Siemens Corporation, Corporate Technology~~~\\
\texttt{justinian.rosca@siemens.com} \\
\And
Karthik Narasimhan \\
Princeton University \\
\texttt{karthikn@princeton.edu} \\
\And
Peter J. Ramadge \\
Princeton University \\
\texttt{ramadge@princeton.edu}\\
}
\begin{document}

\maketitle

\begin{abstract}
We consider the problem of learning control policies that optimize a reward function while satisfying constraints due to considerations of safety, fairness, or other costs. 
We propose a new algorithm, Projection-Based Constrained Policy Optimization (PCPO). This is  an iterative method for optimizing policies in a two-step process: the first step performs a local reward improvement update, while the second step reconciles any constraint violation by projecting the policy back onto the constraint set. 
We theoretically analyze PCPO and provide a lower bound on reward improvement, and an upper bound on constraint violation, for each policy update. 
We further characterize the convergence of PCPO based on two different metrics: $\normltwo$ norm and Kullback-Leibler divergence.
Our empirical results over several control tasks demonstrate that PCPO achieves superior performance, averaging more than 3.5 times less constraint violation and around 15\% higher reward compared to state-of-the-art methods.\footnote{
For code see the project website: \url{https://sites.google.com/view/iclr2020-pcpo}}
%
%
%
%
%
\end{abstract}

\section{Introduction}
Recent advances in deep reinforcement learning (RL) have demonstrated excellent performance on several domains ranging from games like Go~\citep{silver2017mastering} and StarCraft~\citep{openai2019go} to robotic control~\citep{levine2016end}. In these settings, agents are allowed to explore the entire state space and experiment with all possible actions during training. However, in many real-world applications such as self-driving cars and unmanned aerial vehicles, considerations of safety, fairness and other costs prevent the agent from having complete freedom to explore. For instance, an autonomous car, while optimizing its driving policies, must not take any actions that could cause harm to pedestrians or property (including itself). 
In effect, the agent is constrained to take actions that do not violate a specified set of constraints on state-action pairs. In this work, we address the problem of learning control policies that optimize a reward function while satisfying predefined constraints.

The problem of policy learning with constraints is more challenging since directly optimizing for the reward, as in Q-Learning~\citep{mnih2013playing} or policy gradient~\citep{sutton2000policy}, will usually violate the constraints. One approach is to incorporate constraints into the learning process by forming a constrained optimization problem. Then perform policy updates using a conditional gradient descent with line search to ensure constraint 
satisfaction~\citep{achiam2017constrained}. 
However, the base optimization problem can become infeasible if the current policy violates the constraints. Another approach is to add a hyperparameter weighted copy of the constraints to the objective function~~\citep{tessler2018reward}. However, this incurs the cost of extensive hyperparameter tuning. 

To address the above issues, we propose projection-based constrained policy optimization (PCPO). This is an iterative algorithm that performs policy updates in two stages. The first stage maximizes reward using a trust region optimization method (\eg TRPO~\citep{schulman2015trust}) without constraints. This might result in a new intermediate policy that does not satisfy the constraints. The second stage reconciles the constraint violation (if any) by projecting the policy back onto the constraint set, \ie choosing the policy in the constraint set that is closest to the selected intermediate policy. This allows efficient updates to ensure constraint satisfaction without requiring a line search~\citep{achiam2017constrained} or adjusting a weight~\citep{tessler2018reward}. Further, due to the projection step, PCPO offers efficient recovery from infeasible (\ie constraint-violating) states (e.g., due to approximation errors), which existing methods do not handle well.

We analyze PCPO theoretically and derive performance bounds for the algorithm.
Specifically, based on information geometry and policy optimization theory, we construct a lower bound on reward improvement, and an upper bound on constraint violations for each policy update. We find that with a relatively small step size for each policy update, the worst-case constraint violation and reward degradation are tolerable. We further analyze two distance measures for the projection step onto the constraint set. We find that the convergence of PCPO is affected by the smallest and largest singular values of the Fisher information matrix used during training. By observing these singular values, we can choose the appropriate projection best suited to the problem.
%

Empirically, we compare PCPO with state-of-the-art algorithms on four different control tasks, including two Mujoco environments with safety constraints introduced by~\citet{achiam2017constrained} and two traffic management tasks with fairness constraints introduced by~\citet{vinitsky2018benchmarks}. In all cases, the proposed algorithm achieves comparable or superior performance to prior approaches, averaging more reward with fewer cumulative constraint violations. For instance, across the above tasks, PCPO achieves 3.5 times fewer constraint violations and around 15\% more reward. This demonstrates the ability of PCPO robustly learn constraint-satisfying policies, and represents a step towards reliable deployment of RL in real problems.

\section{Preliminaries}
\label{sec:preliminaries}
We frame our policy learning as a constrained Markov Decision Process (CMDP)~\citep{altman1999constrained}, 
where policies will direct the agent to maximize the reward while minimizing the cost.
We define CMDP as the tuple $<\mathcal{S},\mathcal{A},T,R,C>,$ 
where $\mathcal{S}$ is the set of states, 
$\mathcal{A}$ is the set of actions that the agent can take, 
$T:\mathcal{S}\times \mathcal{A}\times \mathcal{S}\rightarrow [0,1]$ is the transition probability of the CMDP,
$R:\mathcal{S}\times \mathcal{A}\rightarrow \R$ is the reward function,
and $C:\mathcal{S}\times \mathcal{A}\rightarrow \R$ is the cost function.
Given the agent's current state $s$, the policy $\pi(a|s) : \mathcal{S}\rightarrow \mathcal{A}$ selects an action $a$ for the agent to take. 
Based on $s$ and $a$, the agent transits to the next state (denoted by $s'$) according to the state transition model $T (s'|s,a),$
and receives the reward and pays the cost, denoted by $R(s,a)$ and $C(s,a)$, respectively.

We aim to learn a policy $\pi$ that maximizes a cumulative discounted reward, denoted by 
\[J^{R}(\pi)\doteq \E_{\tau\sim\pi}\big[\sum_{t=0}^{\infty}\gamma^{t} R(s_{t},a_{t})\big],
\]
while satisfying constraints, \ie making a cumulative discounted cost constraint below a desired threshold $h$, denoted by
\[
J^{C}(\pi)\doteq \E_{\tau\sim\pi}\big[\sum_{t=0}^{\infty}\gamma^{t} C(s_{t},a_{t})\big]\leq h,\]
where $\gamma$ is the discount factor, $\tau$ is the trajectory ($\tau = (s_0,a_0,s_1,\cdots)$), and $\tau \sim \pi$ is shorthand for showing that the distribution over the trajectory depends on $\pi: s_0 \sim \mu, a_t \sim \pi(a_{t}|s_t), s_{t+1} \sim T(s_{t+1}|s_t, a_t),$ where $\mu$ is the initial state distribution.
\citet{kakade2002approximately} give an identity to express the performance of policy $\pi'$ in terms of the advantage function over another policy $\pi:$
\begin{align}
J^{R}(\pi') - J^{R}(\pi) = \frac{1}{1-\gamma}\E_{\substack{s\sim d^{\pi'}\\ a\sim \pi'}}[A^{\pi}_{R}(s,a)],
\label{eq:policies_identity}
\end{align}
where $d^{\pi}$ is the discounted future state distribution, denoted by $d^{\pi}(s)\doteq(1-\gamma)\sum^{\infty}_{t=0}\gamma^{t}P(s_t = s|\pi)$, 
and $A^{\pi}_{R}(s,a)$ is the reward advantage function, denoted by $A^{\pi}_{R}(s,a)\doteq Q^{\pi}_{R}(s,a)-V^{\pi}_{R}(s)$.
Here $Q^{\pi}_{R}(s,a)\doteq\E_{\tau\sim\pi}\big[\sum_{t=0}^{\infty}\gamma^{t} R(s_{t},a_{t})|s_0=s,a_0=a\big]$ is the discounted cumulative reward obtained by the policy $\pi$ given the initial state $s$ and action $a,$ 
and $V^{\pi}_{R}(s)\doteq\E_{\tau\sim\pi}\big[\sum_{t=0}^{\infty}\gamma^{t} R(s_{t},a_{t})|s_0=s\big]$ is the discounted cumulative reward obtained by the policy $\pi$ given the initial state $s$.
Similarly, we have the cost advantage function $A^{\pi}_{C}(s,a)=Q^{\pi}_{C}(s,a)-V^{\pi}_{C}(s),$ where $Q^{\pi}_{C}(s,a)\doteq\E_{\tau\sim\pi}\big[\sum_{t=0}^{\infty}\gamma^{t} C(s_{t},a_{t})|s_0=s,a_0=a\big],$ and $V^{\pi}_{C}(s)\doteq\E_{\tau\sim\pi}\big[\sum_{t=0}^{\infty}\gamma^{t} C(s_{t},a_{t})|s_0=s\big].$ 
%
\section{Projection-Based Constrained Policy Optimization}
\label{sec:model}
%

%

%

%

%

\begin{wrapfigure}{R}{0.5\textwidth}
\centering
\includegraphics[width=0.8\linewidth]{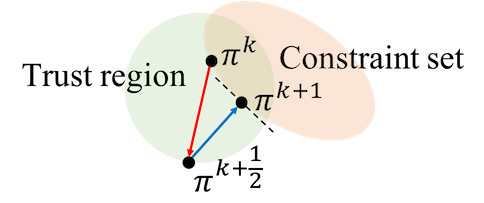}
\caption{Update procedures for PCPO. In step one (red arrow), PCPO follows the reward improvement direction in the trust region (light green). 
In step two (blue arrow), PCPO projects the policy onto the constraint set (light orange).}
\label{fig:pcpo}
\end{wrapfigure}

To robustly learn constraint-satisfying policies, we develop PCPO -- a trust region method that performs policy updates corresponding to reward improvement, followed by projections onto the constraint set. 
PCPO, inspired by \textit{projected gradient descent}, is composed of two steps for each update, a reward improvement step and a projection step (This is illustrated in Fig.~\ref{fig:pcpo}).
\parab{Reward Improvement Step.} First, we optimize the reward function by maximizing the reward advantage function $A^{\pi}_{R}(s,a)$ subject to a Kullback-Leibler (KL) divergence constraint. This constraints the intermediate policy $\pi^{k+\frac{1}{2}}$ to be within a $\delta$-neighbourhood of $\pi^k$:
\begin{align}
\pi^{k+\frac{1}{2}}
=\argmax\limits_{\pi} &\quad\E_{\substack{
s\sim d^{\pi^k}\\
a\sim \pi}}
[A^{\pi^k}_{R}(s,a)]\nonumber \\
\text{s.t.}&\quad\E_{s\sim d^{\pi^{k}}}\big[\KL(\pi ||\pi^{k})[s]\big]\leq \delta.
\label{eq:PCPO_firstStep}
\end{align}
This update rule with the trust region, 
$\{\pi:\E_{s\sim d^{\pi^{k}}}\big[\KL(\pi||\pi^{k})[s]\big]\leq \delta\}$, 
is called Trust Region Policy Optimization (TRPO)~\citep{schulman2015trust}. 
It constraints the policy changes to a divergence neighborhood and guarantees reward improvement.
%


\parab{Projection Step.} Second, we project the intermediate policy $\pi^{k+\frac{1}{2}}$ onto the constraint set by minimizing a distance measure $D$ between $\pi^{k+\frac{1}{2}}$ and $\pi$:  
\begin{align}
\pi^{k+1}=\argmin\limits_{\pi}&\quad D({\pi,\pi^{k+\frac{1}{2}}})\nonumber\\
\text{s.t.}&\quad J^{C}(\pi^{k})+ \E_{\substack{s\sim d^{\pi^{k}}\\ a\sim \pi}}[A^{\pi^k}_{C}(s,a)]\leq h.
\label{eq:PCPO_secondStep}
\end{align}


%
%
The projection step ensures that the constraint-satisfying policy $\pi^{k+1}$ is close to $\pi^{k+\frac{1}{2}}.$ 
We consider two distance measures $D$: $\normltwo$ norm and KL divergence.
%
%
%
%
In contrast, using KL divergence projection in the probability distribution space allows us to provide provable guarantees for PCPO.
%

%

%

%
%
%
%
%

%
%
%

\subsection{Performance Bound for PCPO with KL Divergence Projection}
In safety-critical applications such as autonomous cars, one cares about how worse the performance of a system evolves when applying a learning algorithm.
To this end, for PCPO with KL divergence projection,
we analyze the worst-case performance degradation for each policy update when the current policy $\pi^{k}$ \textit{satisfies} the constraint.
The following theorem provides a lower bound on reward improvement,
and an upper bound on constraint violation for each policy update.
%
%
%
%
%
%

%
\begin{theorem}[\textbf{Worst-case Bound on Updating Constraint-satisfying Policies}]
\label{theorem:feasibleCase}
Define $\epsilon^{\pi^{k+1}}_{R}\doteq \max\limits_{s}\big|\E_{a\sim\pi^{k+1}}[A_{R}^{\pi^{k}}(s,a)]\big|$,
and $\epsilon^{\pi^{k+1}}_{C}\doteq \max\limits_{s}\big|\E_{a\sim\pi^{k+1}}[A_{C}^{\pi^{k}}(s,a)]\big|$.
If the current policy $\pi^k$ satisfies the constraint,
then under KL divergence projection,
the lower bound on reward improvement, and upper bound on constraint violation for each policy update are
\[
J^{R}(\pi^{k+1})-J^{R}(\pi^{k})\geq-\frac{\sqrt{2\delta}\gamma\epsilon^{\pi^{k+1}}_{R}}{(1-\gamma)^{2}},
~\text{and}~J^{C}(\pi^{k+1})\leq h+\frac{\sqrt{2\delta}\gamma\epsilon^{\pi^{k+1}}_{C}}{(1-\gamma)^{2}},
\]
where $\delta$ is the step size in the reward improvement step.
%

%
%
\end{theorem}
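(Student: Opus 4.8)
The strategy is to reduce both claims to the policy--improvement bound of \citet{achiam2017constrained}, which is itself a consequence of the performance identity~\eqref{eq:policies_identity}. Writing $\bar D(\pi')\doteq\E_{s\sim d^{\pi^{k}}}\!\big[\KL(\pi'\|\pi^{k})[s]\big]$, that bound states that for every $\pi'$,
\[
J^{R}(\pi')-J^{R}(\pi^{k})\;\ge\;\frac{1}{1-\gamma}\E_{s\sim d^{\pi^{k}},\,a\sim\pi'}\!\big[A^{\pi^{k}}_{R}(s,a)\big]\;-\;\frac{\sqrt{2\,\bar D(\pi')}\,\gamma\,\epsilon^{\pi'}_{R}}{(1-\gamma)^{2}},
\]
and the analogous statement for the cost has a ``$+$'' in front of the second term and an upper bound on $J^{C}(\pi')-J^{C}(\pi^{k})$. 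I would re-derive this from~\eqref{eq:policies_identity} by replacing $d^{\pi'}$ with $d^{\pi^{k}}$ (paying a total-variation term bounded by $\epsilon^{\pi'}_{R}$ times the average TV distance between $\pi'$ and $\pi^{k}$), then applying Pinsker's inequality $D_{TV}\le\sqrt{\KL/2}$ pointwise and Jensen's inequality to pull the square root outside the expectation. Given this, the theorem follows once I establish (i) $\bar D(\pi^{k+1})\le\delta$, which bounds the penalty terms by $\sqrt{2\delta}\gamma\epsilon^{\pi^{k+1}}_{R}/(1-\gamma)^{2}$ and $\sqrt{2\delta}\gamma\epsilon^{\pi^{k+1}}_{C}/(1-\gamma)^{2}$; (ii) $\E_{s\sim d^{\pi^{k}},\,a\sim\pi^{k+1}}[A^{\pi^{k}}_{R}(s,a)]\ge 0$; and (iii) $\E_{s\sim d^{\pi^{k}},\,a\sim\pi^{k+1}}[A^{\pi^{k}}_{C}(s,a)]\le h-J^{C}(\pi^{k})$, which is exactly the feasibility constraint enforced in~\eqref{eq:PCPO_secondStep}.

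For (i), I would pass to the standard second-order model of the KL divergence in parameter space, $\E_{s\sim d^{\pi^{k}}}[\KL(\pi_{\theta}\|\pi^{k})[s]]\approx\tfrac12(\theta-\theta^{k})^{\top}H(\theta-\theta^{k})$ with $H$ the Fisher information matrix, under which the projection~\eqref{eq:PCPO_secondStep} with KL distance is the Euclidean projection, in the $H$-metric, of $\theta^{k+\frac{1}{2}}$ onto the half-space $\{\theta:\,a^{\top}(\theta-\theta^{k})+c\le 0\}$, where $a$ is the surrogate-cost gradient at $\theta^{k}$ and $c\doteq J^{C}(\pi^{k})-h\le 0$ since $\pi^{k}$ is feasible. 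In particular $\theta^{k}$ lies in this half-space, so the obtuse-angle property $\langle\theta^{k+\frac{1}{2}}-\theta^{k+1},\,\theta^{k}-\theta^{k+1}\rangle_{H}\le 0$ holds, and expanding $\|\theta^{k+\frac{1}{2}}-\theta^{k}\|_{H}^{2}$ gives $\|\theta^{k+\frac{1}{2}}-\theta^{k}\|_{H}^{2}\ge\|\theta^{k+\frac{1}{2}}-\theta^{k+1}\|_{H}^{2}+\|\theta^{k+1}-\theta^{k}\|_{H}^{2}\ge\|\theta^{k+1}-\theta^{k}\|_{H}^{2}$. Translating back through the quadratic model and using that the reward step~\eqref{eq:PCPO_firstStep} keeps $\tfrac12\|\theta^{k+\frac{1}{2}}-\theta^{k}\|_{H}^{2}\le\delta$ then yields $\bar D(\pi^{k+1})\le\delta$.

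For (ii), note the surrogate advantage $\E_{s\sim d^{\pi^{k}},\,a\sim\pi^{k+1}}[A^{\pi^{k}}_{R}(s,a)]$ equals $g^{\top}(\theta^{k+1}-\theta^{k})$ to first order, where $g$ is the surrogate-reward gradient at $\theta^{k}$. Using the closed forms $\theta^{k+\frac{1}{2}}-\theta^{k}=\sqrt{2\delta/(g^{\top}H^{-1}g)}\,H^{-1}g$ and $\theta^{k+1}-\theta^{k+\frac{1}{2}}=-\lambda H^{-1}a$ with $\lambda=\max\!\big(0,\,(a^{\top}(\theta^{k+\frac{1}{2}}-\theta^{k})+c)/(a^{\top}H^{-1}a)\big)\ge 0$, one gets $g^{\top}(\theta^{k+1}-\theta^{k})=\sqrt{2\delta G}-\lambda X$ with $G\doteq g^{\top}H^{-1}g$, $X\doteq g^{\top}H^{-1}a$, $A\doteq a^{\top}H^{-1}a$. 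If $\lambda=0$ this is $\sqrt{2\delta G}\ge 0$; if $\lambda>0$ then $a^{\top}(\theta^{k+\frac{1}{2}}-\theta^{k})+c>0$ together with $c\le 0$ forces $X>0$, and substituting $\lambda$ rewrites the expression as $\sqrt{2\delta/G}\,(G-X^{2}/A)-cX/A$, both summands nonnegative -- the first by Cauchy--Schwarz $X^{2}\le GA$ in the $H^{-1}$ inner product, the second because $c\le 0<X$. Claim (iii) is immediate from the constraint in~\eqref{eq:PCPO_secondStep}. Combining (i)--(iii) with the generic bound and its cost analogue gives $J^{R}(\pi^{k+1})-J^{R}(\pi^{k})\ge-\sqrt{2\delta}\gamma\epsilon^{\pi^{k+1}}_{R}/(1-\gamma)^{2}$ and $J^{C}(\pi^{k+1})\le J^{C}(\pi^{k})+(h-J^{C}(\pi^{k}))+\sqrt{2\delta}\gamma\epsilon^{\pi^{k+1}}_{C}/(1-\gamma)^{2}=h+\sqrt{2\delta}\gamma\epsilon^{\pi^{k+1}}_{C}/(1-\gamma)^{2}$.

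The main obstacle is step (ii) -- and, to a lesser extent, making step (i) fully rigorous: the closed-form updates for $\theta^{k+\frac{1}{2}}$ and $\theta^{k+1}$, and the identification of the surrogate advantage with $g^{\top}(\theta^{k+1}-\theta^{k})$, are valid only for the linearized-objective / quadratic-KL surrogate actually solved by PCPO, so a fully rigorous statement either restricts attention to that surrogate or must carry an explicit linearization-error term; the remaining ingredients (Pinsker, Jensen, the obtuse-angle inequality, Cauchy--Schwarz) are routine.
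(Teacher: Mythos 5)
Your proposal follows essentially the same route as the paper: bound $\E_{s\sim d^{\pi^k}}[\KL(\pi^{k+1}\|\pi^k)[s]]$ by $\delta$ using the non-expansiveness of the projection, feed that into the policy-improvement bound of \citet{achiam2017constrained}, and use the feasibility constraint of the projection step for the cost bound. The one structural difference is where the projection inequality is applied: the paper's Lemma works abstractly in distribution space via the Bregman (generalized Pythagorean) projection inequality, $\KL(\pi^k\|\pi^{k+\frac12})\ge\KL(\pi^k\|\pi^{k+1})+\KL(\pi^{k+1}\|\pi^{k+\frac12})$, and then invokes asymptotic symmetry of the KL divergence to flip the arguments, whereas you carry out the obtuse-angle argument in the $H$-metric on the quadratic surrogate; these are the same idea at different levels of approximation, and neither is fully rigorous without the local/second-order caveat you flag. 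Your step (ii) is a genuine improvement over the paper's proof: the paper silently drops the term $\E_{s\sim d^{\pi^k},a\sim\pi^{k+1}}[A^{\pi^k}_R(s,a)]$ from the lower bound, which requires it to be nonnegative, and offers no justification; your closed-form computation with Cauchy--Schwarz in the $H^{-1}$ inner product (using $c\le 0$ from feasibility of $\pi^k$) supplies exactly the missing argument, albeit only at the level of the linearized surrogate actually solved by PCPO.
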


\begin{proof}
See the supplemental material.
\end{proof}

%
%
Theorem \ref{theorem:feasibleCase} indicates that 
if $\delta$ is small, 
the worst-case performance degradation is tolerable.
%
%
%
%
%
%

Due to approximation errors or the random initialization of policies, 
PCPO may have a constraint-violating update.
Theorem \ref{theorem:feasibleCase} does not give the guarantee on updating a \textit{constraint-violating} policy. 
Hence we analyze worst-case performance degradation for each policy update when the current policy $\pi^k$ \textit{violates} the constraint.
The following theorem provides a lower bound on reward improvement,
and an upper bound on constraint violation for each policy update.
%
%
%
%
%
%
%
%
%
%
%
%

\begin{theorem}[\textbf{Worst-case Bound on Updating Constraint-violating Policies}]
\label{theorem:infeasibleCase}
Define $\epsilon^{\pi^{k+1}}_{R}\doteq \max\limits_{s}\big|\E_{a\sim\pi^{k+1}}[A_{R}^{\pi^{k}}(s,a)]\big|$, 
$\epsilon^{\pi^{k+1}}_{C}\doteq \max\limits_{s}\big|\E_{a\sim\pi^{k+1}}[A_{C}^{\pi^{k}}(s,a)]\big|$,
$b^{+}\doteq \max(0,J^{C}(\pi^k)-h),$
and $\alpha_\mathrm{KL} \doteq \frac{1}{2\va^T\mH^{-1}\va},$
where $\va$ is the gradient of the cost advantage function
and $\mH$ is the Hessian of the KL divergence constraint.
If the current policy $\pi^k$ violates the constraint, then under 
KL divergence projection,
the lower bound on reward improvement and the upper bound on constraint violation for each policy update are
\begin{align}
J^{R}(\pi^{k+1})-J^{R}(\pi^{k})\geq&-\frac{\sqrt{2(\delta+{b^+}^{2}\alpha_\mathrm{KL})}\gamma\epsilon^{\pi^{k+1}}_{R}}{(1-\gamma)^{2}}, \nonumber\\
~\text{and}~J^{C}(\pi^{k+1})\leq& ~h+\frac{\sqrt{2(\delta+{b^+}^{2}\alpha_\mathrm{KL})}\gamma\epsilon^{\pi^{k+1}}_{C}}{(1-\gamma)^{2}},\nonumber
\end{align}
where $\delta$ is the step size in the reward improvement step.
%
%
\end{theorem}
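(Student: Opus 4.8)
The plan is to follow the same two-ingredient structure as the proof of Theorem~\ref{theorem:feasibleCase}: first control the mean KL divergence $\bar D\doteq\E_{s\sim d^{\pi^{k}}}\big[\KL(\pi^{k+1}||\pi^{k})[s]\big]$ induced by one PCPO update, then feed $\bar D$ into the TRPO/CPO-style surrogate performance bounds of \citet{achiam2017constrained}. Those bounds, together with the fact that (by construction) $\pi^{k+1}$ satisfies the linearized cost constraint of \eqref{eq:PCPO_secondStep} and that the reward-improvement step keeps the reward surrogate advantage controlled, give $J^{R}(\pi^{k+1})-J^{R}(\pi^{k})\ge-\frac{\gamma\epsilon^{\pi^{k+1}}_{R}}{(1-\gamma)^{2}}\sqrt{2\bar D}$ and $J^{C}(\pi^{k+1})\le h+\frac{\gamma\epsilon^{\pi^{k+1}}_{C}}{(1-\gamma)^{2}}\sqrt{2\bar D}$, exactly as in the feasible case. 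So the whole theorem reduces to the single inequality $\bar D\le\delta+{b^{+}}^{2}\alpha_\mathrm{KL}$, and Theorem~\ref{theorem:feasibleCase} is recovered as the special case $b^{+}=0$.

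To prove that inequality I would pass to parameter space and use the quadratic/linear approximations that actually define the projection step. Let $\theta^{k},\theta^{k+\frac12},\theta^{k+1}$ denote the parameters of $\pi^{k},\pi^{k+\frac12},\pi^{k+1}$; approximate $\bar D$ by $\frac12(\theta^{k+1}-\theta^{k})^{\top}\mH(\theta^{k+1}-\theta^{k})$ (the first-order term vanishes at $\theta^{k}$), the cost term $\E_{s\sim d^{\pi^{k}},a\sim\pi}[A^{\pi^{k}}_{C}]$ by its linearization $\va^{\top}(\theta-\theta^{k})$, and the projection objective $D(\pi,\pi^{k+\frac12})$ by $\frac12(\theta-\theta^{k+\frac12})^{\top}\mH(\theta-\theta^{k+\frac12})$. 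The reward-improvement step then gives $\frac12(\theta^{k+\frac12}-\theta^{k})^{\top}\mH(\theta^{k+\frac12}-\theta^{k})\le\delta$, and the projection step becomes the convex quadratic program: minimize $\frac12(\theta-\theta^{k+\frac12})^{\top}\mH(\theta-\theta^{k+\frac12})$ subject to $\va^{\top}(\theta-\theta^{k})+b\le0$, where $b\doteq J^{C}(\pi^{k})-h$ so that $b^{+}=\max(0,b)$. Writing out the KKT conditions, its solution is closed-form: $\theta^{k+1}=\theta^{k+\frac12}-\beta\,\mH^{-1}\va$ with $\beta=\max\big(0,\ \big(\va^{\top}(\theta^{k+\frac12}-\theta^{k})+b\big)/(\va^{\top}\mH^{-1}\va)\big)$.

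Now substitute back. With $u\doteq\theta^{k+\frac12}-\theta^{k}$ and $c\doteq\va^{\top}\mH^{-1}\va>0$ (assuming $\mH\succ0$, as for a regularized Fisher matrix), expanding the quadratic gives $\frac12(\theta^{k+1}-\theta^{k})^{\top}\mH(\theta^{k+1}-\theta^{k})=\frac12 u^{\top}\mH u-\beta\,\va^{\top}u+\frac12\beta^{2}c$. If $\va^{\top}u+b\le0$ then $\beta=0$ and this is $\le\delta$. Otherwise $\beta=(\va^{\top}u+b)/c$, the last two terms collapse to $\tfrac{1}{2c}\big(b^{2}-(\va^{\top}u)^{2}\big)$, and a short case split on the sign of $b$ finishes it: if $b\ge0$ then $b^{2}-(\va^{\top}u)^{2}\le b^{2}={b^{+}}^{2}$; if $b<0$ then $b^{+}=0$ while $\va^{\top}u>-b=|b|$ forces $b^{2}-(\va^{\top}u)^{2}<0$. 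Either way $\frac12(\theta^{k+1}-\theta^{k})^{\top}\mH(\theta^{k+1}-\theta^{k})\le\delta+\tfrac{{b^{+}}^{2}}{2c}=\delta+{b^{+}}^{2}\alpha_\mathrm{KL}$, since $\alpha_\mathrm{KL}=\tfrac{1}{2\va^{\top}\mH^{-1}\va}$.

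I expect the genuine difficulty to be bookkeeping rather than any single hard estimate: one must state carefully the regime in which the bound holds (second-order KL expansion, linearized cost constraint, CPO surrogate inequalities — i.e., the relations governing the implemented algorithm), and carry the $\max(0,\cdot)$ cleanly through the quadratic expansion via the case analysis above while also being careful that the surrogate-advantage terms in the CPO bounds are handled with the correct $(1-\gamma)$ powers (as in the feasible-case proof). Once $\bar D\le\delta+{b^{+}}^{2}\alpha_\mathrm{KL}$ is established, the remainder — invoking \eqref{eq:policies_identity}, Pinsker's inequality, and Jensen's inequality to turn $\bar D$ into the $\sqrt{2(\delta+{b^{+}}^{2}\alpha_\mathrm{KL})}$ factor — is identical to the feasible case and introduces no new ideas.
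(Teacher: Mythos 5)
Your proposal is correct and shares the paper's overall skeleton---reduce the theorem to the single inequality $\E_{s\sim d^{\pi^k}}\big[\KL(\pi^{k+1}||\pi^k)[s]\big]\le\delta+{b^+}^2\alpha_\mathrm{KL}$ and then reuse the CPO-style surrogate bounds exactly as in the feasible case---but you establish that key inequality by a genuinely different argument. The paper's Lemma for the infeasible case introduces an auxiliary policy $\pi^{k+1}_l$, the projection of $\pi^{k+\frac12}$ onto the sublevel set $L^{\pi^k}$, applies the Bregman three-point identity to the triple $\pi^k,\pi^{k+1},\pi^{k+1}_l$, evaluates $\E\big[\KL(\pi^{k+1}||\pi^{k+1}_l)\big]\approx{b^+}^2\alpha_\mathrm{KL}$ by a second-order expansion of the closed-form update, and then argues that the remaining cross term $(\nabla\varphi(\pi^k)-\nabla\varphi(\pi^{k+1}_l))^T(\pi^{k+1}-\pi^{k+1}_l)$ is negligible for small $\delta$. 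You instead work entirely within the quadratic model in parameter space: from the KKT conditions you write $\theta^{k+1}=\theta^{k+\frac12}-\beta\,\mH^{-1}\va$ and expand $\frac12\|\theta^{k+1}-\theta^k\|_\mH^2=\frac12 u^T\mH u-\beta\,\va^Tu+\frac12\beta^2\va^T\mH^{-1}\va$, which collapses to $\delta+\frac{b^2-(\va^Tu)^2}{2\va^T\mH^{-1}\va}$ when the constraint is active, and the sign case split on $b$ yields the ${b^+}^2\alpha_\mathrm{KL}$ term (your algebra checks out). Your route buys an exact identity at the level of the quadratic model---no auxiliary policy, no appeal to a vanishing cross term---and it even exhibits the slack $(\va^Tu)^2\alpha_\mathrm{KL}$ in the bound when $b\ge0$; what it gives up is the paper's partially distribution-level framing via the Bregman projection inequality, since you must commit to the second-order KL approximation from the outset. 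Since the paper's own proof also falls back on that approximation (and on the parameter-space update rule) to produce the ${b^+}^2\alpha_\mathrm{KL}$ term, the two arguments ultimately rest on the same assumptions, and yours is the more self-contained of the two.
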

\begin{proof}
See the supplemental material.
\end{proof}
Theorem \ref{theorem:infeasibleCase} indicates that when the policy has greater constraint violation ($b^+$ increases),
its worst-case performance degradation increases.
Note that Theorem \ref{theorem:infeasibleCase} reduces to Theorem \ref{theorem:feasibleCase} if the current policy $\pi^{k}$ satisfies the constraint ($b^+=0$).
The proofs of Theorem \ref{theorem:feasibleCase} and Theorem \ref{theorem:infeasibleCase} follow from the fact that the projection of the policy is non-expansive, \ie the distance between the projected policies is smaller than that of the unprojected policies. This allows us to measure it and bound the KL divergence between the current policy and the new policy.

\section{PCPO Updates}
\label{sec:implementation}
For a large neural network policy with many parameters, it is impractical to directly solve for the PCPO update in Problem \ref{eq:PCPO_firstStep} and Problem \ref{eq:PCPO_secondStep} due to the computational cost.
However, with a small step size $\delta$,
we can approximate the reward function and constraints with a first order expansion,
and approximate the KL divergence constraint in the reward improvement step, and the KL divergence measure in the projection step
with a second order expansion.
We now make several definitions:

$\vg\doteq\nabla_\vtheta\E_{\substack{s\sim d^{\pi^k}a\sim \pi}}[A^{\pi^k}_{R}(s,a)]$ is the gradient of the reward advantage function,\\
$\va\doteq\nabla_\vtheta\E_{\substack{s\sim d^{\pi^k}a\sim \pi}}[A^{\pi^k}_{C}(s,a)]$ is the gradient of the cost advantage function,\\
$\mH_{i,j}\doteq \frac{\partial^2 \E_{s\sim d^{\pi^{k}}}\big[\KL(\pi ||\pi^{k})[s]\big]}{\partial \vtheta_j\partial \vtheta_j}$ is the Hessian of the KL divergence constraint ($\mH$ is also called the Fisher information matrix. It is symmetric positive semi-definite),
$b\doteq J^{C}(\pi^k)-h$ is the constraint violation of the policy $\pi^{k}$,
and $\vtheta$ is the parameter of the policy.

\parab{Reward Improvement Step.} We linearize the objective function at $\pi^{k}$ subject to second order approximation of the KL divergence constraint in order to obtain the following updates:
\begin{align}
    \vtheta^{k+\frac{1}{2}} = \argmax\limits_{\vtheta}&\quad \vg^{T}(\vtheta-\vtheta^{k})  \nonumber\\
    \text{s.t.}&\quad\frac{1}{2}(\vtheta-\vtheta^{k})^{T}\mH(\vtheta-\vtheta^{k})\leq \delta. \label{eq:update1}
\end{align}
%
%


\parab{Projection Step.} If the projection is defined in the parameter space, we can directly use $\normltwo$ norm projection.
On the other hand, if the projection is defined in the probability space, we can use KL divergence projection. This can be approximated through the second order expansion.
Again, we linearize the cost constraint at $\pi^{k}.$
This gives the following update for the projection step:
\begin{align}
    \vtheta^{k+1} = \argmin\limits_{\vtheta}&\quad \frac{1}{2}(\vtheta-{\vtheta}^{k+\frac{1}{2}})^{T}\mL(\vtheta-{\vtheta}^{k+\frac{1}{2}}) \nonumber\\
    \text{s.t.}&\quad\va^{T}(\vtheta-\vtheta^{k})+b\leq 0, \label{eq:update2}
\end{align}
where $\mL=\mI$ for $\normltwo$ norm projection, and $\mL=\mH$ for KL divergence projection.
One may argue that using linear approximation to the constraint set is not enough to ensure constraint satisfaction since the real constraint set is maybe non-convex. 
However, if the step size $\delta$ is small, then the linearization of the constraint set is accurate enough to locally approximate it.


%
We solve Problem~(\ref{eq:update1}) and Problem~(\ref{eq:update2}) using convex programming (See the supplemental material for the derivation). For each policy update, we have
%
\begin{align}
\vtheta^{k+1}=\vtheta^{k}+&\sqrt{\frac{2\delta}{\vg^T\mH^{-1}\vg}}\mH^{-1}\vg
-\max\left(0,\frac{\sqrt{\frac{2\delta}{\vg^T\mH^{-1}\vg}}\va^{T}\mH^{-1}\vg+b}{\va^T\mL^{-1}\va}\right)\mL^{-1}\va.
\label{eq:PCPO_final}
\end{align}
We assume that $\mH$ does not have $0$ as an eigenvalue and hence it is invertible.
PCPO requires to invert $\mH$, which is impractical for huge neural network policies.
Hence we use the conjugate gradient method \citep{schulman2015trust}. 
Algorithm \ref{algo:PCPO} shows the pseudocode.
(See supplemental material for a discussion of the tradeoff between the approximation error and computational efficiency of the conjugate gradient method.) 
%

 \begin{algorithm}[t]
   \caption{Projection-Based Constrained Policy Optimization (PCPO)}
   \label{algo:PCPO}
   \begin{algorithmic}[t]
       \State Initialize policy $\pi^0=\pi(\vtheta^0)$

       \For{$k=0,1,2,\cdots$}
             \State Run $\pi^{k}=\pi(\vtheta^{k})$ and store trajectories in $\mathcal{D}$
             \State Compute $\vg, \va, \mH,$ and $b$ using $\mathcal{D}$
             \State Obtain $\vtheta^{k+1}$ using update in Eq. (\ref{eq:PCPO_final})
             \State Empty $\mathcal{D}$
       \EndFor
   \end{algorithmic} 
\end{algorithm}

\parab{Analysis of PCPO Update Rule.}
\label{subsec:KLandL2normProjeciton}
%
%
%
%
For a problem including multiple constraints, we can extend the update in Eq.~(\ref{eq:PCPO_final}) by using alternating projections. This approach finds a solution in the intersection of multiple constraint sets by sequentially projecting onto each of the sets.
The update rule in Eq. (\ref{eq:PCPO_final}) shows that the difference between PCPO with KL divergence and $\normltwo$ norm projections is the cost update direction, leading to a difference in reward improvement.
%
%
%
%
These two projections converge to different stationary points with different convergence rates related to the smallest and largest singular values of the Fisher information matrix shown in Theorem \ref{theorem:PCPO_converge}.
For our analysis, we make the following assumptions:
we \emph{minimize} the negative reward objective function $f: \R^n \rightarrow\R$ (We follow the convention of the literature that authors typically minimize the objective function). The function $f$ is $L$-smooth and twice continuously differentiable over the closed and convex constraint set $\mathcal{C}$.
%
%
%
\begin{theorem}[\textbf{Reward Improvement Under $\normltwo$ Norm and KL Divergence Projections}]
\label{theorem:PCPO_converge}
Let $\eta\doteq \sqrt{\frac{2\delta}{\vg^{T}\mH^{-1}\vg}}$ in Eq.~(\ref{eq:PCPO_final}), where $\delta$ is the step size for reward improvement,
$\vg$ is the gradient of $f,$
and $\mH$ is the Fisher information matrix.
Let $\sigma_\mathrm{max}(\mH)$ be the largest singular value of $\mH,$
and $\va$ be the gradient of cost advantage function in Eq.~(\ref{eq:PCPO_final}).
Then PCPO with KL divergence projection converges to a stationary point either inside the constraint set or in the boundary of the constraint set. In the latter case, the Lagrangian constraint $\vg=-\alpha \va, \alpha\geq0$ holds. 
Moreover, at step $k+1$ the objective value satisfies
\[
f(\vtheta^{k+1})\leq f(\vtheta^{k})+||\vtheta^{k+1}-\vtheta^{k}||^2_{-\frac{1}{\eta}\mH+\frac{L}{2}\mI}.
\]
PCPO with $\normltwo$ norm projection converges to a stationary point either inside the constraint set or in the boundary of the constraint set. In the latter case, the Lagrangian constraint $\mH^{-1}\vg=-\alpha \va, \alpha\geq0$ holds. 
If $\sigma_\mathrm{max}(\mH)\leq1,$ then a step $k+1$ objective value satisfies 
\[
f(\vtheta^{k+1})\leq f(\vtheta^{k})+(\frac{L}{2}-\frac{1}{\eta})||\vtheta^{k+1}-\vtheta^{k}||^2_2.
\]
\end{theorem}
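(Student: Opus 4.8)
The plan is to combine the closed-form update in Eq.~(\ref{eq:PCPO_final}) with the descent lemma for $L$-smooth functions, and then read off the stationarity statement from the fixed-point equation of that update. Write $\Delta\doteq\vtheta^{k+1}-\vtheta^{k}$. By $L$-smoothness, $f(\vtheta^{k+1})\le f(\vtheta^{k})+\vg^{T}\Delta+\tfrac{L}{2}\norm{\Delta}_{2}^{2}$, so the whole problem reduces to bounding the linear term $\vg^{T}\Delta$ by a quadratic form in $\Delta$. From Eq.~(\ref{eq:PCPO_final}) (with signs adjusted, since here $\vg=\nabla f$ is the gradient of the \emph{negative} reward) we have $\Delta=-\eta\mH^{-1}\vg-c\,\mL^{-1}\va$ with $c=\max\!\big(0,\tfrac{-\eta\,\va^{T}\mH^{-1}\vg+b}{\va^{T}\mL^{-1}\va}\big)\ge 0$.

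For the KL projection ($\mL=\mH$) this is clean: multiplying $\Delta$ by $\mH$ gives $\mH\Delta=-\eta\vg-c\va$, hence $\Delta^{T}\mH\Delta=-\eta\,\vg^{T}\Delta-c\,\va^{T}\Delta$, i.e. $\vg^{T}\Delta=-\tfrac{1}{\eta}\Delta^{T}\mH\Delta-\tfrac{c}{\eta}\va^{T}\Delta$. Substituting into the descent lemma yields $f(\vtheta^{k+1})\le f(\vtheta^{k})+\norm{\Delta}_{-\frac{1}{\eta}\mH+\frac{L}{2}\mI}^{2}-\tfrac{c}{\eta}\va^{T}\Delta$, and it only remains to argue the last term is $\le 0$: if $c=0$ it vanishes, and if $c>0$ the projected iterate lies on the boundary of the linearized constraint of Eq.~(\ref{eq:update2}), so $\va^{T}\Delta=-b$ with $b=J^{C}(\pi^{k})-h\le 0$ since the current policy is constraint-satisfying, giving $-\tfrac{c}{\eta}\va^{T}\Delta=\tfrac{cb}{\eta}\le 0$. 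This is exactly the first displayed bound.

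For the $\normltwo$ projection ($\mL=\mI$) the same substitution produces additional terms of the form $\eta\,\vg^{T}(\mH^{-2}-\mH^{-1})\vg$ together with cross terms between $\mH^{-1}\vg$ and $\va$; these come precisely from the mismatch between the metric $\mH$ used in the reward step and the Euclidean metric used in the projection. They are controlled using $\sigma_\mathrm{max}(\mH)\le 1$, equivalently $\mH\preceq\mI$ (so $\mH^{-1}\succeq\mI$ and $\norm{\Delta}_{\mH}^{2}\le\norm{\Delta}_{2}^{2}$), together with non-expansiveness of the Euclidean projection onto the halfspace $\{\vtheta:\va^{T}(\vtheta-\vtheta^{k})+b\le 0\}$, which — since $\vtheta^{k}$ lies in the halfspace — gives $\langle\vtheta^{k+1}-\vtheta^{k+\frac{1}{2}},\Delta\rangle\le 0$; using the active-constraint identity $\va^{T}\Delta=-b$ again when $c>0$ collapses everything to $\vg^{T}\Delta\le-\tfrac{1}{\eta}\norm{\Delta}_{2}^{2}$, and the descent lemma finishes $f(\vtheta^{k+1})\le f(\vtheta^{k})+(\tfrac{L}{2}-\tfrac{1}{\eta})\norm{\Delta}_{2}^{2}$. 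I expect this bookkeeping — tracking the sign of each cross term and pinning down exactly where $\sigma_\mathrm{max}(\mH)\le 1$ is consumed — to be the main obstacle, and the reason the $\normltwo$ bound needs the spectral hypothesis that the KL bound does not.

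Finally, for convergence: pick $\delta$ (hence $\eta$) small enough that $-\tfrac{1}{\eta}\mH+\tfrac{L}{2}\mI\prec 0$ in the KL case, resp. $\tfrac{L}{2}-\tfrac{1}{\eta}<0$ in the $\normltwo$ case. Then the per-step bounds make $f(\vtheta^{k})$ monotonically decreasing; since $f$ is bounded below on the closed convex set $\mathcal{C}$, the decrements are summable and $\norm{\Delta_{k}}\to 0$. Hence any limit point $\vtheta^{\star}$ satisfies the fixed-point equation $0=-\eta\mH^{-1}\vg-c\,\mL^{-1}\va$: if the projection is inactive ($c=0$) this forces $\vg=0$, a stationary point of $f$ in the interior of $\mathcal{C}$; if active, rearranging gives $\vg=-\alpha\va$ for KL and $\mH^{-1}\vg=-\alpha\va$ for $\normltwo$, with $\alpha=c/\eta\ge 0$ — the Lagrangian/KKT stationarity condition for minimizing $f$ over the (linearized) constraint at the boundary. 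This yields both convergence claims.
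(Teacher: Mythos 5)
Your overall strategy is the same as the paper's: combine the descent lemma for $L$-smooth $f$ with a bound of the form $\vg^{T}\Delta\leq-\frac{1}{\eta}\norm{\Delta}^{2}_{\mL}$ coming from the projection step, and read off stationarity from the fixed-point equation of Eq.~(\ref{eq:PCPO_final}). Where you differ is in how you obtain that key inequality for the KL case: you extract it algebraically from the closed-form KKT solution ($\mH\Delta=-\eta\vg-c\va$, then complementary slackness to kill the $\va^{T}\Delta$ term), whereas the paper invokes the variational characterization of the projection onto the convex set $\mathcal{C}$ (its Lemma on $(\vtheta-\vtheta^{*})^{T}\mL(\vtheta'-\vtheta^{*})\leq0$) with $\vtheta'=\vtheta^{k}\in\mathcal{C}$. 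These are equivalent in substance; your route makes explicit the feasibility requirement $b\leq0$ that the paper only uses implicitly by taking $\vtheta^{k}\in\mathcal{C}$, and it is arguably more self-contained. Your added convergence argument (monotone decrease, summable decrements, $\norm{\Delta_{k}}\to0$, limit points satisfy the fixed-point equation) goes slightly beyond the paper, which only characterizes the stationary points.

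The genuine gap is in the $\normltwo$ half. The same substitution there gives you $\vg^{T}\mH^{-1}\Delta\leq-\frac{1}{\eta}\norm{\Delta}^{2}_{2}$, and what you actually need is $\vg^{T}\Delta\leq\vg^{T}\mH^{-1}\Delta$; you defer this to ``bookkeeping'' controlled by $\sigma_{\mathrm{max}}(\mH)\leq1$, but $\mH\preceq\mI$ (equivalently $\mH^{-1}\succeq\mI$) does \emph{not} by itself imply $\vg^{T}(\mI-\mH^{-1})\Delta\leq0$ when $\vg$ and $\Delta$ are not parallel --- a positive semi-definite gap $\mH^{-1}-\mI$ controls quadratic forms, not bilinear ones, so the cross terms you mention do not automatically have the right sign. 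This is exactly the pressure point of the paper's own proof, which resolves it by expanding $\vg^{T}\mH^{-1}\Delta$ in the singular basis of $\mH$ (and even that manipulation is loose). So your KL bound and both stationarity claims are sound, but the $\normltwo$ bound as sketched is not yet a proof: you need either to supply the sign argument for $\vg^{T}(\mI-\mH^{-1})\Delta$ along the specific direction $\Delta=-\eta\mH^{-1}\vg-c\va$, or to follow the paper's spectral decomposition step explicitly.
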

\begin{proof}
See the supplemental material.
\end{proof}
Theorem \ref{theorem:PCPO_converge} shows that in the stationary point $\vg$ is a line that points to the opposite direction of $\va.$ Further, the improvement of the objective value is affected by the singular value of the Fisher information matrix.
Specifically, the objective of KL divergence projection decreases when $\frac{L\eta}{2}\mI\prec\mH,$ implying that $\sigma_\mathrm{min}(\mH)> \frac{L\eta}{2}.$ 
And the objective of $\normltwo$ norm projection decreases when $\eta<\frac{2}{L},$ implying that condition number of $\mH$ is upper bounded: $\frac{\sigma_\mathrm{max}(\mH)}{\sigma_\mathrm{min}(\mH)}<\frac{2||\vg||^2_2}{L^2\delta}.$ 
Observing the singular values of the Fisher information matrix allows us to adaptively choose the appropriate projection and hence achieve objective improvement.
%
%
In the supplemental material, we further use an example to compare the optimization trajectories and stationary points of KL divergence and $\normltwo$ norm projections.
\section{Related Work}
\label{sec:related}
\parab{Policy Learning with Constraints.} Learning constraint-satisfying policies has been explored in the context of safe RL~\citep{garcia2015comprehensive}.
\begin{wrapfigure}{R}{0.5\textwidth}
\vspace{-3mm} 
\centering
\includegraphics[width=0.75\linewidth]{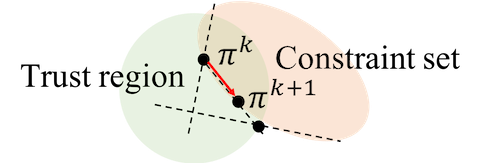}
\caption{Update procedures for CPO~\citep{achiam2017constrained}. CPO computes the update by simultaneously considering the trust region (light green) and the constraint set (light orange). CPO becomes infeasible when these two sets do not intersect.}
\vspace{-3mm}
\label{fig:cpo_plot}
\end{wrapfigure}
The agent learns policies either by (1) exploration of the environment~\citep{achiam2017constrained,tessler2018reward,chow2017risk} or (2) through expert demonstrations~\citep{ross2011reduction,rajeswaran2017learning,gao2018reinforcement}.
However, using expert demonstrations requires humans to label the constraint-satisfying behavior for every possible situation.
The scalability of these rule-based approaches is an issue since many real autonomous systems such as self-driving cars and industrial robots are inherently complex. 
To overcome this issue, PCPO uses the first approach in which the agent learns by trial and error. 
To prevent the agent from having constraint-violating behavior during exploring the environment, PCPO uses the projection onto the constraint set to ensure constraint satisfaction throughout learning.  

\parab{Constraint satisfaction by Projections.}
Using a projection onto a constraint set has been explored for general constrained optimization in other contexts.
For example, 
\citet{akrour2019projections} projects the policy from a parameter space onto the constraint. This ensures the updated policy stays close to the previous policy.
In contrast, we examine constraints that are defined in terms of states and actions.
Similarly, 
\citet{chow2019lyapunov} proposes $\theta$-projection. 
This approach projects the policy parameters $\theta$ onto the constraint set.
However, no provide provable guarantees are provided.
Moreover, the problem is formulated by adding the weighted constraint to the reward objective function.
Since the weight must be tuned, this incurs the cost of hyperparameter tuning. 
%
%
%
In contrast, PCPO eliminates the cost of the hyperparameter tuning, and provides provable guarantees on learning constraint-satisfying policies.

\parab{Comparison to CPO~\citep{achiam2017constrained}.} 
Perhaps the closest work to ours is the approach of \citet{achiam2017constrained}, who proposes the constrained policy optimization (CPO) algorithm to solve the following:
\begin{align}
    \vtheta^{k+1}=\argmax\limits_{\vtheta}~\vg^{T}(\vtheta-\vtheta^{k})\quad\text{s.t.}~\frac{1}{2}(\vtheta-\vtheta^{k})^{T}\mH(\vtheta-\vtheta^{k})\leq \delta,~\va^{T}(\vtheta-\vtheta^{k})+b\leq 0. \label{eq:CPO_update}
\end{align}
CPO simultaneously considers the trust region and the constraint, and uses the line search to select a step size (This is illustrated in Fig. \ref{fig:cpo_plot}).
The update rule of CPO becomes infeasible when the current policy violates the constraint ($b>0$).
CPO recovers by replacing Problem (\ref{eq:CPO_update}) with an update to purely decrease the constraint value: $\vtheta^{k+1}=\vtheta^{k}-\sqrt{\frac{2\delta}{\va^T\mH^{-1}\va}}\mH^{-1}\va.$
This update rule may lead to a slow progress in learning constraint-satisfying policies.
In contrast, PCPO first optimizes the reward and uses the projection to satisfy the constraint.
This ensures a feasible solution, allowing the agent to improve the reward while ensuring constraint satisfaction simultaneously.

\begin{figure*}[t]
\label{fig:tasks}
\centering
\subfloat[Gather
]
{%
\centering\includegraphics[scale=0.21]{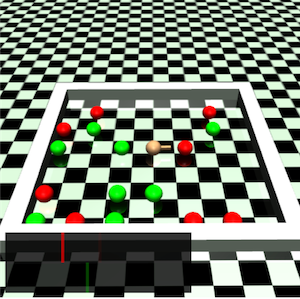}}\hspace{1.25mm} 
\subfloat[Circle
\label{subfig:pc}
]
{%
\centering\includegraphics[scale=0.21]{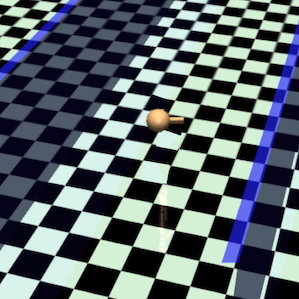}}\hspace{1.25mm} 
\subfloat[Grid
\label{subfig:grid}
]
{%
\centering\includegraphics[scale=0.21]{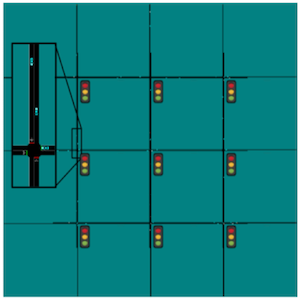}}\hspace{1.25mm}
\subfloat[Bottleneck
]{%
\centering\includegraphics[scale=0.21]{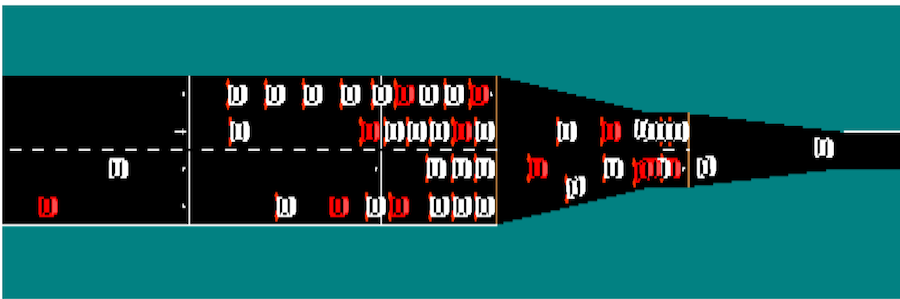}}
\caption{The gather, circle, grid and bottleneck tasks. 
(a) Gather task: 
the agent is rewarded for gathering green apples but is constrained to collect a limited 
number of red fruit \citep{achiam2017constrained}.
(b) Circle task: 
the agent is rewarded for moving in a specified wide circle, but 
is constrained to stay within a safe region smaller than the radius of the circle \citep{achiam2017constrained}.
(c) Grid task: the agent controls the traffic lights in a grid road network and is rewarded for high throughput but constrained 
to let lights stay red for at most 7 consecutive seconds
\citep{vinitsky2018benchmarks}.
(d) Bottleneck task: the agent controls a set of autonomous vehicles (shown in red) in a traffic merge situation and is rewarded for achieving high throughput
but constrained to ensure that human-driven vehicles (shown in white)
have low speed for no more than 10 seconds
\citep{vinitsky2018benchmarks}.
}
\label{fig:tasks}
\end{figure*}


\begin{figure*}[t]
\centering
\subfloat[Point circle\label{subfig:pc}]{\begin{tabular}[b]{@{}c@{}}%
\includegraphics[width=0.33\linewidth]{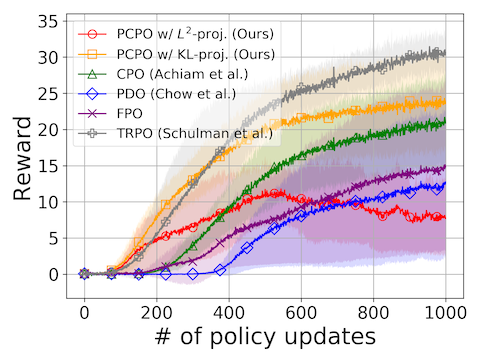}\\
\includegraphics[width=0.33\linewidth]{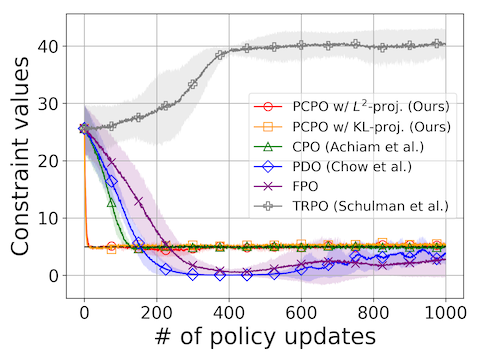}%
\end{tabular}}
\subfloat[Point gather\label{subfig:pg}]{\begin{tabular}[b]{@{}c@{}}%
\includegraphics[width=0.33\linewidth]{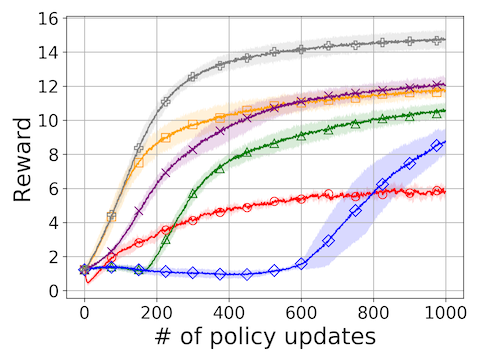}\\
\includegraphics[width=0.33\linewidth]{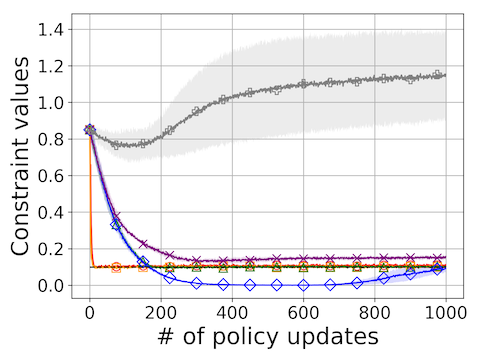}%
\end{tabular}}
\subfloat[Ant circle\label{subfig:ac}]{\begin{tabular}[b]{@{}c@{}}%
\includegraphics[width=0.33\linewidth]{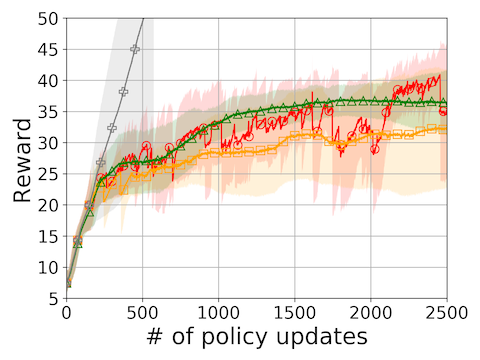}\\
\includegraphics[width=0.33\linewidth]{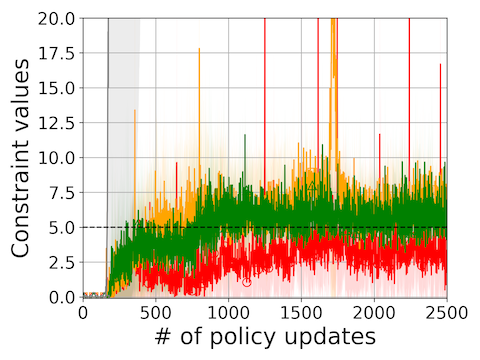}%
\end{tabular}}
\\
\subfloat[Ant gather\label{subfig:ag}]{\begin{tabular}[b]{@{}c@{}}%
\includegraphics[width=0.33\linewidth]{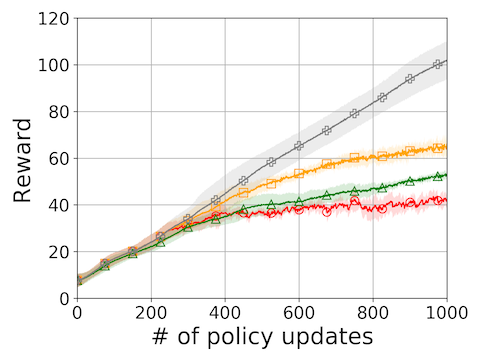}%
\\
\includegraphics[width=0.33\linewidth]{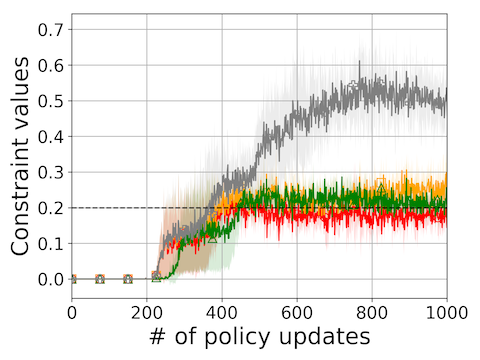}%
\end{tabular}}%
\subfloat[Grid\label{subfig:grid2}]{\begin{tabular}[b]{@{}c@{}}%
\includegraphics[width=0.33\linewidth]{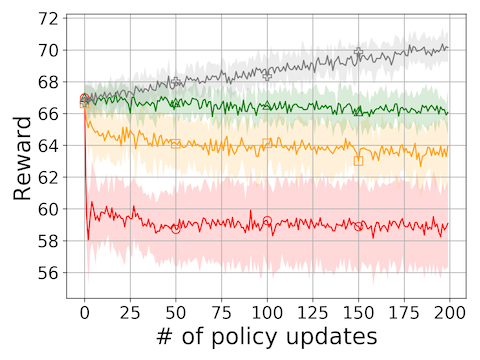}%
\\
\includegraphics[width=0.33\linewidth]{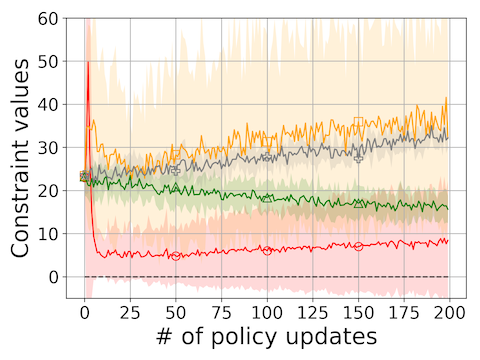}%
\end{tabular}}%
\subfloat[Bottleneck\label{subfig:bn}]{\begin{tabular}[b]{@{}c@{}}%
\includegraphics[width=0.33\linewidth]{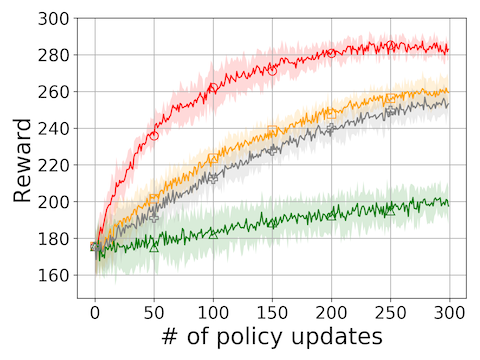}%
\\
\includegraphics[width=0.33\linewidth]{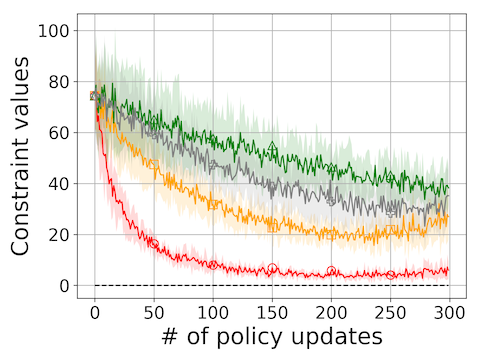}%
\end{tabular}}%
\caption{The values of the discounted reward and the undiscounted constraint value (the total number of constraint violation) along policy updates for the tested algorithms and task pairs. 
The solid line is the mean and the shaded area is the standard deviation, over five runs. 
The dashed line in the cost constraint plot is the cost constraint threshold $h$. 
The curves for baseline oracle, TRPO, indicate the reward and constraint violation values when the constraint is \emph{ignored}. (Best viewed in color, and the legend is shared across all the figures.)
}
\label{fig:performance}
\vspace{-3mm}
\end{figure*}

\section{Experiments}
\label{sec:experiments}
%
%
%
\parab{Tasks.} We compare the proposed algorithm with existing approaches on four control tasks in total: two tasks with safety constraints ((a) and (b) in Fig.~\ref{fig:tasks}),
and two tasks with fairness constraints
((c) and (d) in Fig.~\ref{fig:tasks}). These tasks are briefly described in the caption of Fig.~\ref{fig:tasks}. 
The first two tasks -- \textit{Gather} and \textit{Circle} -- are Mujoco environments with state space constraints introduced by \cite{achiam2017constrained}. The other two tasks -- \textit{Grid} and \textit{Bottleneck} -- are traffic management problems where the agent controls either a traffic light or a fleet of autonomous vehicles. This is especially challenging since the dimensions of state and action spaces are larger, and the dynamics of the environment are inherently complex.

%
%
%
%
%

%
\parab{Baselines.} We 
compare PCPO with four baselines outlined below.

(1) Constrained Policy Optimization (CPO)~\citep{achiam2017constrained}.

%
(2) Primal-dual Optimization (PDO) \citep{chow2017risk}.
In PDO, the weight (dual variables) is learned based on the current constraint satisfaction. 
A PDO policy update
solves: 
    \begin{align}
    \vtheta^{k+1} = \argmax\limits_{\vtheta}\quad \vg^{T}(\vtheta-\vtheta^{k})+\lambda^k \va^{T}(\vtheta-\vtheta^{k}) \label{eq:PDO},
    \end{align}
where 
$\lambda^{k}$ 
is updated using
$
    \lambda^{k+1} = \lambda^{k} + \beta(J^{C}(\pi^{k})-h).
$
Here $\beta$ is a fixed 
learning rate.

(3) Fixed-point Policy Optimization (FPO). A variant of PDO that solves Eq. (\ref{eq:PDO}) using a constant $\lambda$.

(4) Trust Region Policy Optimization (TRPO) \citep{schulman2015trust}.
The TRPO policy update is an \textit{unconstrained} one:
\[
{\textstyle
\vtheta^{k+1}=\vtheta^{k}+\sqrt{\frac{2\delta}{\vg^T\mH^{-1}\vg}}\mH^{-1}\vg.}
\]
Note that TRPO ignores any constraints. We include it to serve as an upper bound baseline on the reward performance.
%

%
%
%
Since the main focus is to compare PCPO with the state-of-the-art algorithm, CPO, PDO and FPO are not shown in the ant circle, ant gather, grid and bottleneck tasks for clarity.



\parab{Experimental Details.} For the gather and circle tasks we test two distinct agents: 
a point-mass ($S \subseteq \R^{9}, A \subseteq \R^{2}$), 
and an ant robot ($S \subseteq \R^{32}, A \subseteq \R^{8}$).
The agent in the grid task is $S \subseteq \R^{156}, A \subseteq \R^{4},$ and the agent in bottleneck task is $S \subseteq \R^{141}, A \subseteq \R^{20}.$
For the simulations in the gather and circle tasks, we use a neural network with two hidden layers of size (64, 32) to represent Gaussian policies. 
For the simulations in the grid and bottleneck tasks, we use a neural network with two hidden layers of size (16, 16) and (50,25) to represent Gaussian policies, respectively. 
In the experiments, since the step size is small, we reuse the Fisher information matrix of the reward improvement step in the KL projection step to reduce the computational cost.
The step size $\delta$ is set to $10^{-4}$ 
for all tasks and 
all tested algorithms.
For each task, we conduct 5 runs to get the mean and 
standard deviation for both the reward and the constraint value over the policy updates.
The experiments are implemented in rllab~\citep{duan2016benchmarking},
a tool for developing and evaluating RL algorithms. See the supplemental material for the details of the experiments.

\begin{figure*}[t]
\centering
\subfloat[Point circle\label{subfig:pc}]{\begin{tabular}[b]{c}%
\includegraphics[width=0.425\linewidth]{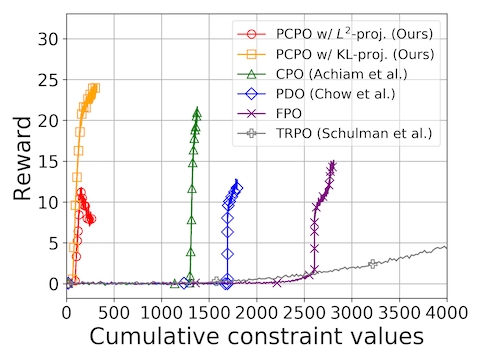}%
\end{tabular}}
\subfloat[Point gather\label{subfig:pg}]{\begin{tabular}[b]{c}%
\includegraphics[width=0.425\linewidth]{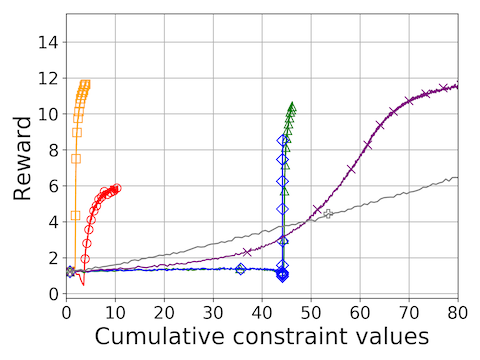}%
\end{tabular}}
\caption{The value of the discounted reward versus the cumulative constraint value for the tested algorithms and task pairs. 
See the supplemental material for learning curves in the other tasks.
PCPO achieves less constraint violation under the same reward improvement compared to the other algorithms.
}
\label{fig:performance_for_rewardvsCost}
\end{figure*}

\parab{Overall Performance.} The learning curves of the discounted reward and the undiscounted constraint value (the total number of constraint violation) over policy updates are shown for all tested algorithms and tasks in Fig.~\ref{fig:performance}.
The dashed line in the constraint figure is the cost constraint threshold $h$.
The curves for baseline oracle, TRPO, indicate the reward and constraint value when the constraint is ignored.
Overall, we find that PCPO is able to improve the reward while having the fastest constraint satisfaction in all tasks.
In particular, PCPO is the only algorithm that learns constraint-satisfying policies across all the tasks.
Moreover we observe that (1) CPO has more constraint violation than PCPO, 
(2) PDO is too conservative in optimizing the reward,
and (3) FPO requires a significant effort to select a good value of $\lambda$.

We also observe that in Grid and Bottleneck task, there is slightly more constraint violation than the easier task such as point circle and point gather.
This is due to complexity of the policy behavior and non-convexity of the constraint set.
However, even with a linear approximation of the constraint set, PCPO still outperforms CPO with 85.15\% and 5.42 times less constraint violation in Grid and Bottleneck task, respectively.


%
These observations suggest that projection step in PCPO drives the agent to learn the constraint-satisfying policy within few policy updates, giving PCPO an advantage in applications.
To show that PCPO achieves the same reward with less constraint violation, we examine the reward versus the cumulative constraint value for the tested algorithms in point circle and point gather task shown in Fig. \ref{fig:performance_for_rewardvsCost}.
We observe that PCPO outperforms CPO significantly with 66 times and 15 times less constraint violation under the same reward improvement in point circle and point gather tasks, respectively.
This observation suggests that PCPO enables the agent to cautiously explore the environment under the constraints.

%
%
%

%
%
%
%

\parab{Comparison of PCPO with KL Divergence vs. $\normltwo$ Norm Projections.}
We observe that PCPO with $\normltwo$ norm projection is more constraint-satisfying than PCPO with KL divergence projection.
In addition, PCPO with $\normltwo$ norm projection tends to have reward fluctuation (point circle, ant circle, and ant gather tasks), while with KL divergence projection tends to have more stable reward improvement (all the tasks).

The above observations indicate that since the gradient of constraint is not multiplied by the Fisher information matrix, the gradient of the constraint is not aligned with the gradient of the reward. This reduces the reward improvement.
However, when the Fisher information matrix is ill-conditioned or not well-estimated, especially in a high dimensional policy space, 
a bad constraint update direction may hinder constraint satisfaction (ant circle, ant gather, grid and bottleneck tasks).
In addition, since the stationary points of KL divergence and $\normltwo$ norm projections are different, they converge to policies with different reward (observe that PCPO with $\normltwo$ norm projection has higher reward than the one with KL divergence projection around 2250 iterations in ant circle task, and has less reward in point gather task).

\parab{Discussion of PDO and FPO.} For the PDO baseline, we see that its constraint values fluctuate especially in the point circle task.
This phenomena suggests that PDO is not able to adjust the weight $\lambda^{k}$ quickly enough to meet the constraint threshold, which hinders the efficiency of learning constraint-satisfying policies. 
If the learning rate $\beta$ is too big, the agent will be too conservative in improving the reward. For FPO, we also see that it learns near constraint-satisfying policies with slightly larger reward improvement compared to PDO.
However, in practice FPO requires a lot of engineering effort to select a good value of $\lambda$.
Since PCPO requires no hyperparameter tuning, it has the 
advantage of robustly learning constraint-satisfying policies over PDO and FPO.  
%

%
%
%

%
%
%
%
%
%

%
%
\section{Conclusion}
\label{sec:conclusion}
We address the problem of finding constraint-satisfying policies.
The proposed algorithm -- projection-based constrained policy optimization (PCPO) -- optimizes for the reward function while using the projections to ensure constraint satisfaction.
This update rule allows PCPO to maintain the feasibility of the optimization problem of each update, addressing the issue of state-of-the-art approaches.
The algorithm achieves comparable or superior performance to state-of-the-art approaches in terms of reward improvement and constraint satisfaction in all cases.
We further analyze the convergence of PCPO, and find that certain tasks may prefer either KL divergence projection or $\normltwo$ norm projection.
Future work will consider the following: 
%
%
(1) examining the Fisher information matrix to iteratively prescribe the choice of projection for policy update, and hence robustly learn constraint-satisfying policies with more reward improvement,
and (2) using expert demonstration or other domain knowledge to reduce the sample complexity.
%




\subsubsection*{Acknowledgments}
The authors would like to thank the anonymous reviewers and the area chair for their comments. Tsung-Yen Yang thanks Siemens Corporation, Corporate Technology for their support.

\bibliography{iclr2020_conference}

\begin{thebibliography}{21}
\providecommand{\natexlab}[1]{#1}
\providecommand{\url}[1]{\texttt{#1}}
\expandafter\ifx\csname urlstyle\endcsname\relax
  \providecommand{\doi}[1]{doi: #1}\else
  \providecommand{\doi}{doi: \begingroup \urlstyle{rm}\Url}\fi

\bibitem[Achiam et~al.(2017)Achiam, Held, Tamar, and
  Abbeel]{achiam2017constrained}
Joshua Achiam, David Held, Aviv Tamar, and Pieter Abbeel.
\newblock Constrained policy optimization.
\newblock In \emph{Proceedings of International Conference on Machine
  Learning}, pp.\  22--31, 2017.

\bibitem[Akrour et~al.(2019)Akrour, Pajarinen, Neumann, and
  Peters]{akrour2019projections}
Riad Akrour, Joni Pajarinen, Gerhard Neumann, and Jan Peters.
\newblock Projections for approximate policy iteration algorithms.
\newblock In \emph{Proceedings of International Conference on Machine
  Learning}, pp.\  181--190, 2019.

\bibitem[AlphaStar(2019)]{openai2019go}
AlphaStar.
\newblock Alphastar: Mastering the real-time strategy game starcraft ii, 2019.
\newblock URL
  \url{https://deepmind.com/blog/article/alphastar-mastering-real-time-strategy-game-starcraft-ii}.

\bibitem[Altman(1999)]{altman1999constrained}
Eitan Altman.
\newblock \emph{Constrained Markov decision processes}, volume~7.
\newblock CRC Press, 1999.

\bibitem[Chow et~al.(2017)Chow, Ghavamzadeh, Janson, and Pavone]{chow2017risk}
Yinlam Chow, Mohammad Ghavamzadeh, Lucas Janson, and Marco Pavone.
\newblock Risk-constrained reinforcement learning with percentile risk
  criteria.
\newblock \emph{Journal of Machine Learning Research}, 18\penalty0
  (1):\penalty0 6070--6120, 2017.

\bibitem[Chow et~al.(2019)Chow, Nachum, Faust, Ghavamzadeh, and
  Duenez-Guzman]{chow2019lyapunov}
Yinlam Chow, Ofir Nachum, Aleksandra Faust, Mohammad Ghavamzadeh, and Edgar
  Duenez-Guzman.
\newblock Lyapunov-based safe policy optimization for continuous control.
\newblock \emph{arXiv preprint arXiv:1901.10031}, 2019.

\bibitem[Duan et~al.(2016)Duan, Chen, Houthooft, Schulman, and
  Abbeel]{duan2016benchmarking}
Yan Duan, Xi~Chen, Rein Houthooft, John Schulman, and Pieter Abbeel.
\newblock Benchmarking deep reinforcement learning for continuous control.
\newblock In \emph{Proceedings of International Conference on Machine
  Learning}, pp.\  1329--1338, 2016.

\bibitem[Gao et~al.(2018)Gao, Lin, Yu, Levine, and
  Darrell]{gao2018reinforcement}
Yang Gao, Ji~Lin, Fisher Yu, Sergey Levine, and Trevor Darrell.
\newblock Reinforcement learning from imperfect demonstrations.
\newblock \emph{arXiv preprint arXiv:1802.05313}, 2018.

\bibitem[Garcia \& Fernandez(2015)Garcia and
  Fernandez]{garcia2015comprehensive}
Javier Garcia and Fernando Fernandez.
\newblock A comprehensive survey on safe reinforcement learning.
\newblock \emph{Journal of Machine Learning Research}, 16\penalty0
  (1):\penalty0 1437--1480, 2015.

\bibitem[Kakade \& Langford(2002)Kakade and Langford]{kakade2002approximately}
Sham Kakade and John Langford.
\newblock Approximately optimal approximate reinforcement learning.
\newblock In \emph{Proceedings of International Conference on Machine
  Learning}, pp.\  267--274, 2002.

\bibitem[Levine et~al.(2016)Levine, Finn, Darrell, and Abbeel]{levine2016end}
Sergey Levine, Chelsea Finn, Trevor Darrell, and Pieter Abbeel.
\newblock End-to-end training of deep visuomotor policies.
\newblock \emph{Journal of Machine Learning Research}, 17\penalty0
  (1):\penalty0 1334--1373, 2016.

\bibitem[Mnih et~al.(2013)Mnih, Kavukcuoglu, Silver, Graves, Antonoglou,
  Wierstra, and Riedmiller]{mnih2013playing}
Volodymyr Mnih, Koray Kavukcuoglu, David Silver, Alex Graves, Ioannis
  Antonoglou, Daan Wierstra, and Martin Riedmiller.
\newblock Playing atari with deep reinforcement learning.
\newblock \emph{arXiv preprint arXiv:1312.5602}, 2013.

\bibitem[Rajeswaran et~al.(2017)Rajeswaran, Kumar, Gupta, Vezzani, Schulman,
  Todorov, and Levine]{rajeswaran2017learning}
Aravind Rajeswaran, Vikash Kumar, Abhishek Gupta, Giulia Vezzani, John
  Schulman, Emanuel Todorov, and Sergey Levine.
\newblock Learning complex dexterous manipulation with deep reinforcement
  learning and demonstrations.
\newblock \emph{arXiv preprint arXiv:1709.10087}, 2017.

\bibitem[Ross et~al.(2011)Ross, Gordon, and Bagnell]{ross2011reduction}
St{\'e}phane Ross, Geoffrey Gordon, and Drew Bagnell.
\newblock A reduction of imitation learning and structured prediction to
  no-regret online learning.
\newblock In \emph{Proceedings of International Conference on Artificial
  Intelligence and Statistics}, pp.\  627--635, 2011.

\bibitem[Schulman et~al.(2015{\natexlab{a}})Schulman, Levine, Abbeel, Jordan,
  and Moritz]{schulman2015trust}
John Schulman, Sergey Levine, Pieter Abbeel, Michael Jordan, and Philipp
  Moritz.
\newblock Trust region policy optimization.
\newblock In \emph{Proceedings of International Conference on Machine
  Learning}, pp.\  1889--1897, 2015{\natexlab{a}}.

\bibitem[Schulman et~al.(2015{\natexlab{b}})Schulman, Moritz, Levine, Jordan,
  and Abbeel]{schulman2015high}
John Schulman, Philipp Moritz, Sergey Levine, Michael Jordan, and Pieter
  Abbeel.
\newblock High-dimensional continuous control using generalized advantage
  estimation.
\newblock \emph{arXiv preprint arXiv:1506.02438}, 2015{\natexlab{b}}.

\bibitem[Shewchuk(1994)]{shewchuk1994introduction}
Jonathan~R. Shewchuk.
\newblock An introduction to the conjugate gradient method without the
  agonizing pain, 1994.

\bibitem[Silver et~al.(2017)Silver, Schrittwieser, Simonyan, Antonoglou, Huang,
  Guez, Hubert, Baker, Lai, Bolton, et~al.]{silver2017mastering}
David Silver, Julian Schrittwieser, Karen Simonyan, Ioannis Antonoglou, Aja
  Huang, Arthur Guez, Thomas Hubert, Lucas Baker, Matthew Lai, Adrian Bolton,
  et~al.
\newblock Mastering the game of go without human knowledge.
\newblock \emph{Nature}, 550\penalty0 (7676):\penalty0 354, 2017.

\bibitem[Sutton et~al.(2000)Sutton, McAllester, Singh, and
  Mansour]{sutton2000policy}
Richard~S. Sutton, David~A. McAllester, Satinder~P. Singh, and Yishay Mansour.
\newblock Policy gradient methods for reinforcement learning with function
  approximation.
\newblock In \emph{Advances in Neural Information Processing Systems}, pp.\
  1057--1063, 2000.

\bibitem[Tessler et~al.(2018)Tessler, Mankowitz, and Mannor]{tessler2018reward}
Chen Tessler, Daniel~J. Mankowitz, and Shie Mannor.
\newblock Reward constrained policy optimization.
\newblock \emph{arXiv preprint arXiv:1805.11074}, 2018.

\bibitem[Vinitsky et~al.(2018)Vinitsky, Kreidieh, Le~Flem, Kheterpal, Jang, Wu,
  Liaw, Liang, and Bayen]{vinitsky2018benchmarks}
Eugene Vinitsky, Aboudy Kreidieh, Luc Le~Flem, Nishant Kheterpal, Kathy Jang,
  Fangyu Wu, Richard Liaw, Eric Liang, and Alexandre~M. Bayen.
\newblock Benchmarks for reinforcement learning in mixed-autonomy traffic.
\newblock In \emph{Proceedings of Conference on Robot Learning}, pp.\
  399--409, 2018.

\end{thebibliography}
\bibliographystyle{iclr2020_conference}

\newpage
\appendix
\setcounter{section}{18}
\section{Supplementary Materials}

\subsection{Proof of Theorem \ref{theorem:feasibleCase}:~Performance Bound on Updating the Constraint-satisfying Policy}
\label{appendix:proof_theorem_1}
To prove the policy performance bound when the current policy is feasible (\ie constraint-satisfying),
we prove the KL divergence between $\pi^{k}$ and $\pi^{k+1}$ for the KL divergence projection.
We then prove our main theorem for the worst-case performance degradation.
\begin{lemma}
\label{lemma:feasible}
If the current policy $\pi^{k}$ satisfies the constraint,
the constraint set is closed and convex,
the KL divergence constraint for the first step is $\E_{s\sim d^{\pi^{k}}}\big[\KL(\pi^{k+\frac{1}{2}} ||\pi^{k})[s]\big]\leq \delta,$
where $\delta$ is the step size in the reward improvement step,
then under KL divergence projection, 
we have 
\[
\E_{s\sim d^{\pi^{k}}}\big[\KL(\pi^{k+1} ||\pi^{k})[s]\big]\leq \delta.
\]
%
%
%
\end{lemma}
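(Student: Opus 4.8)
The plan is to reduce the lemma to the non-expansiveness of the projection step, combined with the trust-region constraint from the reward improvement step, working in the second-order (Fisher-metric) description of the two PCPO updates that the paper uses throughout. Concretely: since $\pi^k$ is feasible, it lies in the constraint set, so its projection is itself; because projection onto a closed convex set does not increase distances, $\pi^{k+1}$ is no farther (in the relevant metric) from $\pi^k$ than $\pi^{k+\frac{1}{2}}$ is; and $\pi^{k+\frac{1}{2}}$ is already within the $\delta$-trust region of $\pi^k$ by construction.

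First I would fix notation. Write $\vtheta^k,\vtheta^{k+\frac{1}{2}},\vtheta^{k+1}$ for the parameters of $\pi^k,\pi^{k+\frac{1}{2}},\pi^{k+1}$, and let $\|\vv\|_{\mH}^2 = \vv^T\mH\vv$, where $\mH$ is the Fisher information matrix, i.e.\ the Hessian of $\E_{s\sim d^{\pi^k}}\big[\KL(\pi\|\pi^k)[s]\big]$ at $\vtheta^k$; by assumption $\mH$ is invertible and positive definite. To the order of approximation used in the updates, the first-step constraint $\E_{s\sim d^{\pi^k}}\big[\KL(\pi^{k+\frac{1}{2}}\|\pi^k)[s]\big]\le\delta$ becomes $\tfrac{1}{2}\|\vtheta^{k+\frac{1}{2}}-\vtheta^k\|_{\mH}^2\le\delta$, and the KL-divergence projection (with $\mL=\mH$) computes $\vtheta^{k+1}$ as the $\mH$-orthogonal projection $\Pi_{\mathcal{C}}(\vtheta^{k+\frac{1}{2}})$ of $\vtheta^{k+\frac{1}{2}}$ onto the closed convex linearized constraint set $\mathcal{C}=\{\vtheta:\va^T(\vtheta-\vtheta^k)+b\le 0\}$ with $b=J^C(\pi^k)-h$.

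Next I would use feasibility of $\pi^k$. Since the current policy satisfies the constraint, $b\le 0$, so $\vtheta^k\in\mathcal{C}$, hence $\Pi_{\mathcal{C}}(\vtheta^k)=\vtheta^k$. The $\mH$-weighted projection onto a closed convex set is non-expansive in the norm $\|\cdot\|_{\mH}$ (the standard variational-inequality/obtuse-angle argument, valid because $\mH\succ 0$), so
\[
\|\vtheta^{k+1}-\vtheta^k\|_{\mH}=\|\Pi_{\mathcal{C}}(\vtheta^{k+\frac{1}{2}})-\Pi_{\mathcal{C}}(\vtheta^k)\|_{\mH}\le\|\vtheta^{k+\frac{1}{2}}-\vtheta^k\|_{\mH}.
\]
Combining with the first-step bound gives $\tfrac{1}{2}\|\vtheta^{k+1}-\vtheta^k\|_{\mH}^2\le\delta$, and a second-order Taylor expansion of $\E_{s\sim d^{\pi^k}}\big[\KL(\pi^{k+1}\|\pi^k)[s]\big]$ about $\vtheta^k$ (vanishing gradient, Hessian $\mH$) yields $\E_{s\sim d^{\pi^k}}\big[\KL(\pi^{k+1}\|\pi^k)[s]\big]\le\delta$, which is the claim.

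The main obstacle is making the non-expansiveness step airtight and being honest about the approximation: the identity $\KL\approx\tfrac{1}{2}\|\cdot\|_{\mH}^2$ holds only to leading order, so the clean "$\le\delta$" should be read as an equality-to-second-order statement consistent with the rest of the paper's analysis. An alternative route that avoids the parameter-space linearization is to argue directly in distribution space via the generalized Pythagorean inequality for the KL (Bregman) divergence: for the projection $\pi^{k+1}=\argmin_{\pi\in\mathcal{C}}D(\pi,\pi^{k+\frac{1}{2}})$ and the feasible point $\pi^k\in\mathcal{C}$, the inequality $D(\pi^k,\pi^{k+\frac{1}{2}})\ge D(\pi^k,\pi^{k+1})+D(\pi^{k+1},\pi^{k+\frac{1}{2}})$ gives $D(\pi^k,\pi^{k+1})\le D(\pi^k,\pi^{k+\frac{1}{2}})\le\delta$, after which one passes between the two KL orderings by their second-order equivalence. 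Either way, the essential content is that feasibility of $\pi^k$ plus non-expansiveness of the projection keeps $\pi^{k+1}$ inside the same $\delta$-trust region around $\pi^k$ that already contained $\pi^{k+\frac{1}{2}}$.
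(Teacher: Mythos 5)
Your proposal is correct, and your ``alternative route'' is in fact the paper's own proof: the paper applies the Bregman (generalized Pythagorean) projection inequality in distribution space,
$\E_{s\sim d^{\pi^{k}}}\big[\KL(\pi^{k}\|\pi^{k+\frac{1}{2}})[s]\big]\geq \E_{s\sim d^{\pi^{k}}}\big[\KL(\pi^{k}\|\pi^{k+1})[s]\big]+\E_{s\sim d^{\pi^{k}}}\big[\KL(\pi^{k+1}\|\pi^{k+\frac{1}{2}})[s]\big]$,
which holds because $\pi^{k}$ lies in the closed convex constraint set and $\pi^{k+1}$ is the KL projection of $\pi^{k+\frac{1}{2}}$; it then drops the nonnegative last term and invokes ``asymptotic symmetry'' of the KL divergence for nearby policies to swap the argument order and match the trust-region constraint. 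Your primary route is the parameter-space shadow of this: replacing each KL term by $\tfrac{1}{2}\|\cdot\|^{2}_{\mH}$ turns the Bregman Pythagorean inequality into the obtuse-angle/non-expansiveness property of the $\mH$-weighted projection onto the linearized half-space, with $\vtheta^{k}$ a fixed point by feasibility. What each buys: the quadratic version describes the update the algorithm actually computes (Eq.~\ref{eq:update2} with $\mL=\mH$) and makes the non-expansiveness step exact, while the distribution-space version is the statement the lemma literally asserts and is what Theorem~\ref{theorem:feasibleCase} consumes. Both routes share the same single lossy step, which you flag more honestly than the paper does: the trust region bounds $\KL(\pi^{k+\frac{1}{2}}\|\pi^{k})$ whereas the projection inequality controls $\KL(\pi^{k}\|\pi^{k+1})$, so one must pass between the two orderings via their second-order (Fisher-metric) equivalence; the paper's ``asymptotically symmetric'' is exactly this approximation, and the lemma's clean $\leq\delta$ should be read modulo it.
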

\begin{proof}
By the Bregman divergence projection inequality, 
$\pi^{k}$ being in the constraint set,
and $\pi^{k+1}$ being the projection of the $\pi^{k+\frac{1}{2}}$ onto the constraint set,
we have 
\begin{align*}
&\E_{s\sim d^{\pi^{k}}}\big[\KL(\pi^{k} ||\pi^{k+\frac{1}{2}})[s]\big]\geq 
\E_{s\sim d^{\pi^{k}}}\big[\KL(\pi^{k}||\pi^{k+1})[s]\big] +
\E_{s\sim d^{\pi^{k}}}\big[\KL(\pi^{k+1} ||\pi^{k+\frac{1}{2}})[s]\big]\\
\Rightarrow\delta\geq&
\E_{s\sim d^{\pi^{k}}}\big[\KL(\pi^{k} ||\pi^{k+\frac{1}{2}})[s]\big]\geq 
\E_{s\sim d^{\pi^{k}}}\big[\KL(\pi^{k}||\pi^{k+1})[s]\big].
\end{align*}
The derivation uses the fact that KL divergence is always greater than zero.
We know that KL divergence is asymptotically symmetric when updating the policy within a local neighbourhood.
Thus, we have
\[
\delta\geq
\E_{s\sim d^{\pi^{k}}}\big[\KL(\pi^{k+\frac{1}{2}} ||\pi^{k})[s]\big]\geq 
\E_{s\sim d^{\pi^{k}}}\big[\KL(\pi^{k+1}||\pi^{k})[s]\big].
\]
\end{proof}
Now we use Lemma \ref{lemma:feasible} to prove our main theorem.
\begin{theorem}
\label{theorem:feasibleCase_appendix}
Define $\epsilon^{\pi^{k+1}}_{R}\doteq \max\limits_{s}\big|\E_{a\sim\pi^{k+1}}[A_{R}^{\pi^{k}}(s,a)]\big|$,
and $\epsilon^{\pi^{k+1}}_{C}\doteq \max\limits_{s}\big|\E_{a\sim\pi^{k+1}}[A_{C}^{\pi^{k}}(s,a)]\big|$.
If the current policy $\pi^k$ satisfies the constraint,
then under the KL divergence projection,
the lower bound on reward improvement, and upper bound on constraint violation for each policy update are
\[
J^{R}(\pi^{k+1})-J^{R}(\pi^{k})\geq-\frac{\sqrt{2\delta}\gamma\epsilon^{\pi^{k+1}}_{R}}{(1-\gamma)^{2}},
~\text{and}~J^{C}(\pi^{k+1})\leq h+\frac{\sqrt{2\delta}\gamma\epsilon^{\pi^{k+1}}_{C}}{(1-\gamma)^{2}},
\]
where $\delta$ is the step size in the reward improvement step.
%
%

\end{theorem}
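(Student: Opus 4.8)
The plan is to derive the theorem from Lemma~\ref{lemma:feasible} together with a ``coupled'' policy performance difference bound of the kind used by \citet{achiam2017constrained}. Lemma~\ref{lemma:feasible} supplies the key geometric fact, namely that the KL projection does not move the policy away from $\pi^{k}$: $\E_{s\sim d^{\pi^{k}}}\big[\KL(\pi^{k+1}\|\pi^{k})[s]\big]\leq\delta$. Applying Pinsker's inequality state-wise and then Jensen converts this to a bound on the average total-variation distance, $\E_{s\sim d^{\pi^{k}}}\big[D_{\mathrm{TV}}(\pi^{k+1}\|\pi^{k})[s]\big]\leq\sqrt{\delta/2}$, which is the quantity that actually controls the worst-case drift.

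Next I would establish (or cite) the bound on the shift of the discounted state distribution, $\|d^{\pi^{k+1}}-d^{\pi^{k}}\|_{1}\leq\frac{2\gamma}{1-\gamma}\E_{s\sim d^{\pi^{k}}}\big[D_{\mathrm{TV}}(\pi^{k+1}\|\pi^{k})[s]\big]$, obtained by expressing $d^{\pi}$ through the transition operator and telescoping. Writing $\bar{A}^{\pi^{k}}_{R}(\pi^{k+1})(s)\doteq\E_{a\sim\pi^{k+1}}[A^{\pi^{k}}_{R}(s,a)]$ and starting from the identity~(\ref{eq:policies_identity}) with $\pi'=\pi^{k+1}$ and $\pi=\pi^{k}$, I would split the expectation over $d^{\pi^{k+1}}$ into one over $d^{\pi^{k}}$ plus a remainder, and bound the remainder by $\|d^{\pi^{k+1}}-d^{\pi^{k}}\|_{1}\cdot\max_{s}|\bar{A}^{\pi^{k}}_{R}(\pi^{k+1})(s)|\leq\frac{2\gamma}{1-\gamma}\sqrt{\delta/2}\,\epsilon^{\pi^{k+1}}_{R}=\frac{\sqrt{2\delta}\,\gamma\,\epsilon^{\pi^{k+1}}_{R}}{1-\gamma}$, with the analogous estimate for the cost in terms of $\epsilon^{\pi^{k+1}}_{C}$.

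The two bounds then follow by combining this estimate with the properties of the two PCPO steps. For the reward, the leading surrogate term $\frac{1}{1-\gamma}\E_{s\sim d^{\pi^{k}}}[\bar{A}^{\pi^{k}}_{R}(\pi^{k+1})(s)]$ is non-negative (see the last paragraph), so dividing the remainder estimate by $1-\gamma$ gives $J^{R}(\pi^{k+1})-J^{R}(\pi^{k})\geq-\frac{\sqrt{2\delta}\,\gamma\,\epsilon^{\pi^{k+1}}_{R}}{(1-\gamma)^{2}}$. For the cost, the projection-step constraint in~(\ref{eq:PCPO_secondStep}), read with the $\frac{1}{1-\gamma}$ discount on the surrogate advantage, says exactly that $J^{C}(\pi^{k})+\frac{1}{1-\gamma}\E_{s\sim d^{\pi^{k}}}[\bar{A}^{\pi^{k}}_{C}(\pi^{k+1})(s)]\leq h$; substituting this into $J^{C}(\pi^{k+1})\leq J^{C}(\pi^{k})+\frac{1}{1-\gamma}\E_{s\sim d^{\pi^{k}}}[\bar{A}^{\pi^{k}}_{C}(\pi^{k+1})(s)]+\frac{\sqrt{2\delta}\,\gamma\,\epsilon^{\pi^{k+1}}_{C}}{(1-\gamma)^{2}}$ yields $J^{C}(\pi^{k+1})\leq h+\frac{\sqrt{2\delta}\,\gamma\,\epsilon^{\pi^{k+1}}_{C}}{(1-\gamma)^{2}}$.

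The step I expect to be the main obstacle is justifying that the surrogate reward improvement $\E_{s\sim d^{\pi^{k}}}[\bar{A}^{\pi^{k}}_{R}(\pi^{k+1})(s)]$ of the \emph{projected} policy is non-negative: it is clearly non-negative for the intermediate policy $\pi^{k+\frac{1}{2}}$, which maximizes this (linear-in-$\pi$) functional over a neighbourhood of $\pi^{k}$, but the projection step minimizes a distance, not this functional. In the linearized picture of Section~\ref{sec:implementation} the argument is clean: the surrogate is linear, $\vtheta^{k+\frac{1}{2}}-\vtheta^{k}\propto\mH^{-1}\vg$, and the KL projection is the $\mH$-orthogonal projection onto a half-space that contains $\vtheta^{k}$ (because $\pi^{k}$ is feasible), so the obtuse-angle property of projections onto convex sets gives $\vg^{T}(\vtheta^{k+1}-\vtheta^{k})\geq0$; the corresponding statement for the exact KL projection should come out of the Bregman divergence projection inequality already used in Lemma~\ref{lemma:feasible}. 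A secondary, purely cosmetic point is reconciling the $\frac{1}{1-\gamma}$ factor on the surrogate cost that the proof needs with its omission in the display of Problem~(\ref{eq:PCPO_secondStep}); everything else is routine manipulation with Pinsker's inequality and the state-distribution-shift lemma.
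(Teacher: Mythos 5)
Your proposal is correct and follows essentially the same route as the paper: the paper likewise combines Lemma~\ref{lemma:feasible} (non-expansiveness of the KL projection, via the Bregman projection inequality) with the policy performance difference bound of \citet{achiam2017constrained} — which it cites as a black box rather than re-deriving through Pinsker's inequality and the state-distribution-shift estimate as you do — and then applies Jensen to pass from $\E_{s}\big[\sqrt{\tfrac{1}{2}\KL(\pi^{k+1}\|\pi^{k})[s]}\big]$ to $\sqrt{\delta/2}$. The one place you are more careful than the paper is the step you flag as the ``main obstacle'': the paper's proof silently drops the surrogate term $\E_{s\sim d^{\pi^{k}},a\sim\pi^{k+1}}[A^{\pi^{k}}_{R}(s,a)]$, which implicitly assumes it is non-negative for the \emph{projected} policy, and your obtuse-angle/half-space argument is exactly the justification that is missing there.
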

\begin{proof}
%
%

%
By the theorem in \citet{achiam2017constrained} and Lemma \ref{lemma:feasible}, 
we have the following reward degradation bound for each policy update:
\begin{align}
J^R(\pi^{k+1}) - J^R(\pi^{k}) &\geq \frac{1}{1-\gamma}\E_{\substack{s\sim d^{\pi^{k}}\\ a\sim \pi^{k+1}}}\Big[
A^{\pi^k}_{R}(s,a)-\frac{2\gamma\epsilon^{\pi^{k+1}}_{R}}{1-\gamma}\sqrt{\frac{1}{2}\KL(\pi^{k+1}||\pi^k)[s]}\Big]\nonumber \\
 & \geq \frac{1}{1-\gamma}\E_{\substack{s\sim d^{\pi^{k}}\nonumber \\ a\sim \pi^{k+1}}}\Big[-\frac{2\gamma\epsilon^{\pi^{k+1}}_{R}}{1-\gamma}\sqrt{\frac{1}{2}\KL(\pi^{k+1}||\pi^k)[s]}\Big]\nonumber \\
 & \geq -\frac{\sqrt{2\delta}\gamma\epsilon^{\pi^{k+1}}_{R}}{(1-\gamma)^2}.\nonumber 
\end{align}
Again, we have the following constraint violation bound for each policy update:
\begin{align}
J^C(\pi^k) + \frac{1}{1-\gamma}\E_{\substack{s\sim d^{\pi^{k}}\\ a\sim \pi^{k+1}}}\Big[A^{\pi^k}_R(s,a)\Big]\leq h,\label{eq:feasible_cost_frist}
\end{align}
and
\begin{align}
 J^C(\pi^{k+1})-  J^C(\pi^{k})\leq \frac{1}{1-\gamma}\E_{\substack{s\sim d^{\pi^{k}}\\ a\sim \pi^{k+1}}}\Big[
A^{\pi^k}_{C}(s,a)+\frac{2\gamma\epsilon^{\pi^{k+1}}_{C}}{1-\gamma}\sqrt{\frac{1}{2}\KL(\pi^{k+1}||\pi^k)[s]}\Big].\label{eq:feasible_cost_second}
\end{align}
Combining Eq.~(\ref{eq:feasible_cost_frist})  and Eq.~(\ref{eq:feasible_cost_second}), we have
\begin{align}
J^C(\pi^{k+1})&\leq h + \frac{1}{1-\gamma}\E_{\substack{s\sim d^{\pi^{k}}\\ a\sim \pi^{k+1}}}\Big[\frac{2\gamma\epsilon^{\pi^{k+1}}_{C}}{1-\gamma}\sqrt{\frac{1}{2}\KL(\pi^{k+1}||\pi^k)[s]}\Big] \nonumber\\
& \leq h + \frac{\sqrt{2\delta}\gamma\epsilon^{\pi^{k+1}}_{C}}{(1-\gamma)^2}.\nonumber
\end{align}
%
%
\end{proof}

\subsection{Proof of Theorem \ref{theorem:infeasibleCase}:~Performance Bound on Updating the Constraint-violating Policy}
\label{appendix:proof_theorem_2}
To prove the policy performance bound when the current policy is infeasible (\ie constraint-violating),
we prove the KL divergence between $\pi^{k}$ and $\pi^{k+1}$ for the KL divergence projection.
We then prove our main theorem for the worst-case performance degradation.
\begin{lemma}
\label{lemma:infeasible}
If the current policy $\pi^{k}$ violates the constraint,
the constraint set is closed and convex,
the KL divergence constraint for the first step is $\E_{s\sim d^{\pi^{k}}}\big[\KL(\pi^{k+\frac{1}{2}} ||\pi^{k})[s]\big]\leq \delta,$
where $\delta$ is the step size in the reward improvement step,
then under the KL divergence projection,
we have 
\[
\E_{s\sim d^{\pi^{k}}}\big[\KL(\pi^{k+1} ||\pi^{k})[s]\big]\leq \delta+{b^+}^2\alpha_\mathrm{KL},
\]
%
%
where $\alpha_\mathrm{KL} \doteq \frac{1}{2\va^T\mH^{-1}\va},$ $\va$ is the gradient of the cost advantage function,
$\mH$ is the Hessian of the KL divergence constraint,
and $b^+\doteq\max(0,J^{C}(\pi^k)-h).$
%
\end{lemma}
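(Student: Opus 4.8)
The plan is to reduce the constraint-violating case to the constraint-satisfying case (Lemma \ref{lemma:feasible}) by decomposing the projection into two conceptual steps. The key geometric picture is: the reward improvement step produces $\pi^{k+\frac{1}{2}}$ with $\E_{s\sim d^{\pi^k}}[\KL(\pi^{k+\frac12}\|\pi^k)[s]]\le\delta$, but since $\pi^k$ itself is now \emph{outside} the constraint set, projecting $\pi^{k+\frac12}$ onto that set may move us further from $\pi^k$ than $\sqrt{2\delta}$. The extra distance is controlled entirely by how far $\pi^k$ is from the constraint boundary, which is exactly $b^+$ measured in the appropriate metric.

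First I would set up the linearized picture used throughout Section \ref{sec:implementation}: work in parameter space with the quadratic form $\mH$ (the Fisher matrix) as the local metric, so that $\E_{s\sim d^{\pi^k}}[\KL(\pi\|\pi^k)[s]]\approx\frac12(\vtheta-\vtheta^k)^T\mH(\vtheta-\vtheta^k)$, and the constraint set is the half-space $\{\vtheta:\va^T(\vtheta-\vtheta^k)+b\le 0\}$ where $b=J^C(\pi^k)-h$. The KL projection of $\pi^{k+\frac12}$ onto this half-space is the $\mH$-orthogonal projection. Second, I would use the following triangle-type inequality in the $\mH$-norm: if $b^+=\max(0,b)>0$, then $\vtheta^k$ violates the constraint by an amount that, converted to $\mH$-distance, costs at most $\|\vtheta^*-\vtheta^k\|_\mH$ where $\vtheta^*$ is the $\mH$-projection of $\vtheta^k$ onto the half-space; a direct computation (minimizing $\frac12\|\vtheta-\vtheta^k\|_\mH^2$ subject to $\va^T(\vtheta-\vtheta^k)+b^+=0$) gives $\|\vtheta^*-\vtheta^k\|_\mH^2 = \frac{(b^+)^2}{\va^T\mH^{-1}\va} = 2(b^+)^2\alpha_\mathrm{KL}$. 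Third, I would combine: $\vtheta^{k+1}$ is the $\mH$-projection of $\vtheta^{k+\frac12}$ onto the half-space, and since the half-space is convex and contains $\vtheta^*$, the Bregman/Pythagorean inequality (as in Lemma \ref{lemma:feasible}, but now anchored at $\vtheta^*$ rather than at $\vtheta^k$) gives $\|\vtheta^{k+1}-\vtheta^*\|_\mH\le\|\vtheta^{k+\frac12}-\vtheta^*\|_\mH$. Then by the triangle inequality in the $\mH$-norm,
\[
\|\vtheta^{k+1}-\vtheta^k\|_\mH \le \|\vtheta^{k+1}-\vtheta^*\|_\mH + \|\vtheta^*-\vtheta^k\|_\mH,
\]
and bounding each term (the first by $\|\vtheta^{k+\frac12}-\vtheta^k\|_\mH+\|\vtheta^k-\vtheta^*\|_\mH\le\sqrt{2\delta}+\sqrt{2(b^+)^2\alpha_\mathrm{KL}}$ after one more triangle step, or more carefully by a direct decomposition) yields $\frac12\|\vtheta^{k+1}-\vtheta^k\|_\mH^2\le\delta+(b^+)^2\alpha_\mathrm{KL}$, which translates back to $\E_{s\sim d^{\pi^k}}[\KL(\pi^{k+1}\|\pi^k)[s]]\le\delta+(b^+)^2\alpha_\mathrm{KL}$ using the same asymptotic symmetry of KL invoked at the end of Lemma \ref{lemma:feasible}.

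The main obstacle I anticipate is making the triangle-inequality bookkeeping tight enough to land exactly on $\delta+(b^+)^2\alpha_\mathrm{KL}$ rather than a looser constant like $(\sqrt\delta+\sqrt{(b^+)^2\alpha_\mathrm{KL}})^2$; getting the clean additive form requires exploiting orthogonality — specifically that the displacement $\vtheta^*-\vtheta^k$ is parallel to $\mH^{-1}\va$ while the residual component of $\vtheta^{k+\frac12}-\vtheta^k$ after projection onto the half-space direction behaves additively in squared norm (a Pythagorean split along the $\mH^{-1}\va$ direction versus its $\mH$-orthogonal complement). A secondary subtlety is that all of this is exact only in the linearized/quadratic approximation; as elsewhere in the paper, one treats the second-order expansion of KL and the first-order expansion of the cost constraint as equalities, which is justified for small $\delta$ (and small $b^+$). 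I would also need to note the edge case $b^+=0$, where the lemma collapses to Lemma \ref{lemma:feasible}.
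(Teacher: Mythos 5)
Your proposal is correct and rests on the same underlying idea as the paper's proof: introduce an auxiliary point displaced by exactly $-\frac{b^+}{\va^T\mH^{-1}\va}\mH^{-1}\va$ (squared $\mH$-norm $2{b^+}^2\alpha_\mathrm{KL}$), reduce one piece of the motion to the feasible-case bound of Lemma \ref{lemma:feasible}, and use orthogonality along the constraint normal so that the two contributions add in squared norm rather than in norm. The differences are in the framing. The paper stays in distribution space: it projects $\pi^{k+\frac{1}{2}}$ onto the sublevel set $L^{\pi^k}$ passing through $\pi^k$ to obtain $\pi^{k+1}_l$, applies Lemma \ref{lemma:feasible} to that set (for which $\pi^k$ \emph{is} feasible) to get $\E_{s\sim d^{\pi^k}}[\KL(\pi^{k+1}_l\|\pi^k)[s]]\le\delta$, then invokes the Bregman three-point identity, approximating $\KL(\pi^{k+1}\|\pi^{k+1}_l)$ by the quadratic form to produce ${b^+}^2\alpha_\mathrm{KL}$ and dismissing the cross term via $\nabla\varphi(\pi^k)\approx\nabla\varphi(\pi^{k+1}_l)$ for small $\delta$. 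You instead anchor at $\vtheta^*$, the projection of $\vtheta^k$ itself, and work in the linearized parameter space throughout. One caution: the triangle-inequality route you sketch first genuinely does not land on the stated bound --- it yields $(\sqrt{\delta}+b^+\sqrt{\alpha_\mathrm{KL}})^2$ at best, and the cross term $\frac{b^+\va^T(\vtheta^{k+\frac{1}{2}}-\vtheta^k)}{\va^T\mH^{-1}\va}$ appearing in $\|\vtheta^{k+\frac{1}{2}}-\vtheta^*\|^2_\mH$ has no definite sign --- so the ``more careful direct decomposition'' you flag is not optional. Carried out, it is clean: writing $u=\vtheta^{k+\frac{1}{2}}-\vtheta^k$ and splitting $u$ into its component along $\mH^{-1}\va$ plus its $\mH$-orthogonal complement $u_\perp$, the active projection gives $\vtheta^{k+1}-\vtheta^k=u_\perp-\frac{b}{\va^T\mH^{-1}\va}\mH^{-1}\va$, whence $\frac{1}{2}\|\vtheta^{k+1}-\vtheta^k\|^2_\mH=\frac{1}{2}\|u_\perp\|^2_\mH+{b^+}^2\alpha_\mathrm{KL}\le\delta+{b^+}^2\alpha_\mathrm{KL}$. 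This is arguably tighter in spirit than the paper's version, since the cross term vanishes exactly by orthogonality in the quadratic model rather than being argued away asymptotically; what it gives up is that it commits to the linearization from the outset, whereas the paper's three-point formulation is stated at the level of distributions before any approximation is made.
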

\begin{proof}
We define the sublevel set of cost constraint function for the current infeasible policy $\pi^k$:
\[
L^{\pi^k}=\{\pi~|~J^{C}(\pi^{k})+ \E_{\substack{s\sim d^{\pi^{k}}\\ a\sim \pi}}[A^{\pi^k}_{C}(s,a)]\leq J^{C}(\pi^{k})\}.
\]
This implies that the current policy $\pi^k$ lies in $L^{\pi^k}$, 
and $\pi^{k+\frac{1}{2}}$ is projected onto the constraint set: $\{\pi~|~J^{C}(\pi^{k})+ \E_{\substack{s\sim d^{\pi^{k}}\\ a\sim \pi}}[A^{\pi^k}_{C}(s,a)]\leq h\}.$
Next, we define the policy $\pi^{k+1}_l$ as the projection of $\pi^{k+\frac{1}{2}}$ onto $L^{\pi^k}.$
By the Three-point Lemma, for these three polices $\pi^k, \pi^{k+1},$ and $\pi^{k+1}_l$, with $\varphi(\vx)\doteq\sum_i x_i\log x_i$ (this is illustrated in Fig. \ref{fig:pcpo_infeasible}), we have
\begin{align}
\delta \geq  \E_{s\sim d^{\pi^{k}}}\big[\KL(\pi^{k+1}_l ||\pi^{k})[s]\big]&=\E_{s\sim d^{\pi^{k}}}\big[\KL(\pi^{k+1} ||\pi^{k})[s]\big] \nonumber\\ 
&-\E_{s\sim d^{\pi^{k}}}\big[\KL(\pi^{k+1} ||\pi^{k+1}_l)[s]\big] \nonumber \\
&+\E_{s\sim d^{\pi^{k}}}\big[(\nabla\varphi(\pi^k)-\nabla\varphi(\pi^{k+1}_{l}))^T(\pi^{k+1}-\pi^{k+1}_l)[s]\big] \nonumber \\
\Rightarrow \E_{s\sim d^{\pi^{k}}}\big[\KL(\pi^{k+1} ||\pi^{k})[s]\big]&\leq \delta + \E_{s\sim d^{\pi^{k}}}\big[\KL(\pi^{k+1} ||\pi^{k+1}_l)[s]\big] \nonumber \\
&- \E_{s\sim d^{\pi^{k}}}\big[(\nabla\varphi(\pi^k)-\nabla\varphi(\pi^{k+1}_{l}))^T(\pi^{k+1}-\pi^{k+1}_l)[s]\big]. \label{eq:infeasible_firstPart}
\end{align}
The inequality $\E_{s\sim d^{\pi^{k}}}\big[\KL(\pi^{k+1}_l ||\pi^{k})[s]\big]\leq\delta$ comes from that  $\pi^{k}$ and $\pi^{k+1}_l$ are in $L^{\pi^k}$, and Lemma \ref{lemma:feasible}.
If the constraint violation of the current policy $\pi^k$ is small, 
\ie $b^+$ is small, 
$\E_{s\sim d^{\pi^{k}}}\big[\KL(\pi^{k+1} ||\pi^{k+1}_l)[s]\big]$ can be approximated by the second order expansion. 
By the update rule in Eq. (\ref{eq:PCPO_final}), 
we have 
\begin{align}
\E_{s\sim d^{\pi^{k}}}\big[\KL(\pi^{k+1} ||\pi^{k+1}_l)[s]\big] &\approx \frac{1}{2}(\vtheta^{k+1}-\vtheta^{k+1}_l)^{T}\mH(\vtheta^{k+1}-\vtheta^{k+1}_l)\nonumber\\
&=\frac{1}{2} \Big(\frac{b^+}{\va^T\mH^{-1}\va}\mH^{-1}\va\Big)^T\mH\Big(\frac{b^+}{\va^T\mH^{-1}\va}\mH^{-1}\va\Big)\nonumber\\
&=\frac{{b^+}^2}{2\va^T\mH^{-1}\va}\nonumber\\
&={b^+}^2\alpha_\mathrm{KL}, \label{eq:infeasible_secondPart}
\end{align}
where $\alpha_\mathrm{KL} \doteq \frac{1}{2\va^T\mH^{-1}\va}.$

And since $\delta$ is small, we have $\nabla\varphi(\pi^k)-\nabla\varphi(\pi^{k+1}_{l})\approx \mathbf{0}$ given $s$.
Thus, the third term in  Eq.~(\ref{eq:infeasible_firstPart}) can be eliminated. 
Combining Eq.~(\ref{eq:infeasible_firstPart}) and Eq.~(\ref{eq:infeasible_secondPart}), we have 
\[
\E_{s\sim d^{\pi^{k}}}\big[\KL(\pi^{k+1}||\pi^{k})[s]\big]\leq \delta+{b^+}^2\alpha_\mathrm{KL}.
\]
%
%
%
%
%
%
\end{proof}

\begin{figure*}[t]
\centering
\includegraphics[scale=0.5]{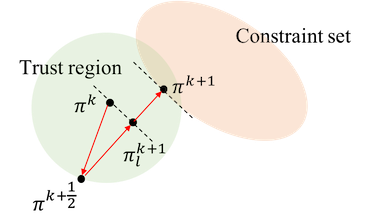}
\caption{
Update procedures for PCPO when the current policy $\pi^k$ is infeasible. 
$\pi^{k+1}_{l}$ is the projection of $\pi^{k+\frac{1}{2}}$ onto the sublevel set of the constraint set.
We find the KL divergence between $\pi^{k}$ and $\pi^{k+1}.$
}
\label{fig:pcpo_infeasible}
\end{figure*}

Now we use Lemma \ref{lemma:infeasible} to prove our main theorem.
\begin{theorem}
\label{theorem:infeasibleCase_appendix}
Define $\epsilon^{\pi^{k+1}}_{R}\doteq \max\limits_{s}\big|\E_{a\sim\pi^{k+1}}[A_{R}^{\pi^{k}}(s,a)]\big|$, 
$\epsilon^{\pi^{k+1}}_{C}\doteq \max\limits_{s}\big|\E_{a\sim\pi^{k+1}}[A_{C}^{\pi^{k}}(s,a)]\big|$,
$b^{+}\doteq \max(0,J^{C}(\pi^k)-h),$
and $\alpha_\mathrm{KL} \doteq \frac{1}{2\va^T\mH^{-1}\va},$
where $\va$ is the gradient of the cost advantage function
and $\mH$ is the Hessian of the KL divergence constraint.
If the current policy $\pi^k$ violates the constraint, then under the KL divergence projection,
the lower bound on reward improvement and the upper bound on constraint violation for each policy update are
\begin{align}
J^{R}(\pi^{k+1})-J^{R}(\pi^{k})\geq&-\frac{\sqrt{2(\delta+{b^+}^{2}\alpha_\mathrm{KL})}\gamma\epsilon^{\pi^{k+1}}_{R}}{(1-\gamma)^{2}}, \nonumber\\
~\text{and}~J^{C}(\pi^{k+1})\leq&~h+\frac{\sqrt{2(\delta+{b^+}^{2}\alpha_\mathrm{KL})}\gamma\epsilon^{\pi^{k+1}}_{C}}{(1-\gamma)^{2}},\nonumber
\end{align}
where $\delta$ is the step size in the reward improvement step.
%
%
\end{theorem}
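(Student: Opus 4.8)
\textbf{Proof proposal for Theorem~\ref{theorem:infeasibleCase}.}
The plan is to mirror the proof of Theorem~\ref{theorem:feasibleCase} exactly, with the only change being that the KL divergence bound $\E_{s\sim d^{\pi^{k}}}\big[\KL(\pi^{k+1}||\pi^{k})[s]\big]\leq\delta$ is replaced by the weaker bound $\E_{s\sim d^{\pi^{k}}}\big[\KL(\pi^{k+1}||\pi^{k})[s]\big]\leq\delta+{b^+}^{2}\alpha_\mathrm{KL}$ from Lemma~\ref{lemma:infeasible}. So the first step is simply to invoke Lemma~\ref{lemma:infeasible}, which does all the real work of controlling how far the projected policy $\pi^{k+1}$ can drift from $\pi^{k}$ when $\pi^{k}$ itself lies outside the constraint set.

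Given that KL bound, I would apply the policy performance difference inequality of \citet{achiam2017constrained} (the same one used in the proof of Theorem~\ref{theorem:feasibleCase_appendix}): for the reward,
\[
J^R(\pi^{k+1}) - J^R(\pi^{k}) \geq \frac{1}{1-\gamma}\E_{\substack{s\sim d^{\pi^{k}}\\ a\sim \pi^{k+1}}}\Big[A^{\pi^k}_{R}(s,a)-\tfrac{2\gamma\epsilon^{\pi^{k+1}}_{R}}{1-\gamma}\sqrt{\tfrac{1}{2}\KL(\pi^{k+1}||\pi^k)[s]}\Big].
\]
Dropping the (possibly negative) advantage term and substituting $\sqrt{\tfrac12\KL}\le\sqrt{\tfrac12(\delta+{b^+}^2\alpha_\mathrm{KL})}$ pulled out of the expectation gives the claimed lower bound $-\sqrt{2(\delta+{b^+}^2\alpha_\mathrm{KL})}\gamma\epsilon^{\pi^{k+1}}_{R}/(1-\gamma)^2$. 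For the cost, I would use the analogous upper bound on $J^C(\pi^{k+1})-J^C(\pi^k)$ together with the linearized feasibility constraint satisfied by $\pi^{k+1}$, namely $J^C(\pi^k)+\tfrac{1}{1-\gamma}\E[A^{\pi^k}_C(s,a)]\le h$; combining these exactly as in the feasible case yields $J^C(\pi^{k+1})\le h+\sqrt{2(\delta+{b^+}^2\alpha_\mathrm{KL})}\gamma\epsilon^{\pi^{k+1}}_{C}/(1-\gamma)^2$.

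The main obstacle is really contained in Lemma~\ref{lemma:infeasible}, not in the theorem itself: one must argue via the Three-point Lemma that projecting onto the true constraint set $\{J^C(\pi^k)+\E[A^{\pi^k}_C]\le h\}$ is close to projecting onto the sublevel set $L^{\pi^k}=\{J^C(\pi^k)+\E[A^{\pi^k}_C]\le J^C(\pi^k)\}$ through $\pi^k$, quantify the gap $\KL(\pi^{k+1}||\pi^{k+1}_l)$ via a second-order expansion around $\pi^{k+1}_l$ using the closed-form update in Eq.~(\ref{eq:PCPO_final}) (this is where $\alpha_\mathrm{KL}=1/(2\va^T\mH^{-1}\va)$ and the step $b^+$ enter), and discard the cross term $(\nabla\varphi(\pi^k)-\nabla\varphi(\pi^{k+1}_l))^T(\pi^{k+1}-\pi^{k+1}_l)$ as negligible for small $\delta$. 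Once that lemma is in hand, the theorem is a routine substitution. Finally, I would note that setting $b^+=0$ recovers Theorem~\ref{theorem:feasibleCase} verbatim, as the paper remarks.
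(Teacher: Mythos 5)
Your proposal matches the paper's proof: the paper likewise establishes Lemma~\ref{lemma:infeasible} (via the Three-point Lemma, the sublevel set $L^{\pi^k}$, the second-order expansion giving ${b^+}^2\alpha_\mathrm{KL}$, and dropping the cross term for small $\delta$) and then states that the theorem follows by repeating the argument of Theorem~\ref{theorem:feasibleCase_appendix} with the enlarged KL radius $\delta+{b^+}^2\alpha_\mathrm{KL}$. Your substitution into the performance-difference bounds and the combination with the linearized feasibility constraint is exactly that same routine step, so the proposal is correct and takes essentially the same approach.
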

\begin{proof}
Following the same proof in Theorem \ref{theorem:feasibleCase_appendix}, we complete the proof.
\end{proof}
Note that the bounds we obtain for the infeasibe case; to the best of our knowledge, are new results.
%
\subsection{Proof of Analytical Solution to PCPO}
\label{appendix:proof_update_rule_1}
\begin{theorem}
Consider the PCPO problem.
In the first step, we optimize the reward:
\begin{align*}
    \vtheta^{k+\frac{1}{2}} = \argmax\limits_{\vtheta}&\quad \vg^{T}(\vtheta-\vtheta^{k}) \\
    \text{s.t.}&\quad\frac{1}{2}(\vtheta-\vtheta^{k})^{T}\mH(\vtheta-\vtheta^{k})\leq \delta,
\end{align*}
and in the second step, we project the policy onto the constraint set:
\begin{align*}
    \vtheta^{k+1} = \argmin\limits_{\vtheta}&\quad \frac{1}{2}(\vtheta-{\vtheta}^{k+\frac{1}{2}})^{T}\mL(\vtheta-{\vtheta}^{k+\frac{1}{2}}) \\
    \text{s.t.}&\quad\va^{T}(\vtheta-\vtheta^{k})+b\leq 0,
\end{align*}
where $\vg, \va, \vtheta \in \R^n, b, \delta\in\R, \delta>0,$ and $\mH,\mL\in\R^{n\times n}, \mL=\mH$ if using the KL divergence projection, and $\mL=\mI$ if using the $\normltwo$ norm projection.
When there is at least one strictly feasible point, the optimal solution satisfies
\[
\vtheta^{k+1}=\vtheta^{k}+\sqrt{\frac{2\delta}{\vg^T\mH^{-1}\vg}}\mH^{-1}\vg\nonumber\\
-\max(0,\frac{\sqrt{\frac{2\delta}{\vg^T\mH^{-1}\vg}}\va^{T}\mH^{-1}\vg+b}{\va^T\mL^{-1}\va})\mL^{-1}\va,
\]
assuming that $\mH$ is invertible to get a unique solution.
\end{theorem}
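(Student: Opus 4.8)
The plan is to solve each of the two subproblems in closed form using the Karush--Kuhn--Tucker (KKT) conditions, which here are both necessary and sufficient: each subproblem is convex (a linear objective over an ellipsoid, and a strictly convex quadratic over a halfspace), Slater's condition holds (for the first step because $\delta>0$ makes $\vtheta^{k}$ strictly feasible, for the second by the stated hypothesis of a strictly feasible point), and $\mH$ is symmetric positive semidefinite and invertible, hence positive definite, so $\mL\in\{\mH,\mI\}$ is positive definite as well and $\mL^{-1}$ exists.

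First I would dispatch the reward improvement step. Writing $\vd\doteq\vtheta-\vtheta^{k}$, the problem is to maximize $\vg^{T}\vd$ subject to $\tfrac12\vd^{T}\mH\vd\leq\delta$. Assuming $\vg\neq\vzero$, the ellipsoidal constraint is active at the optimum; introducing a multiplier $\nu\geq0$, stationarity of the Lagrangian gives $\vg=\nu\mH\vd$, i.e. $\vd=\nu^{-1}\mH^{-1}\vg$. Substituting into the active constraint $\tfrac12\vd^{T}\mH\vd=\delta$ determines $\nu^{-2}=2\delta/(\vg^{T}\mH^{-1}\vg)$, and since $\vg^{T}\vd=\nu^{-1}\vg^{T}\mH^{-1}\vg$ with $\vg^{T}\mH^{-1}\vg>0$, the maximizing choice is the positive root $\nu^{-1}=\sqrt{2\delta/(\vg^{T}\mH^{-1}\vg)}$. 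This yields $\vtheta^{k+\frac12}=\vtheta^{k}+\sqrt{\tfrac{2\delta}{\vg^{T}\mH^{-1}\vg}}\,\mH^{-1}\vg$.

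Next I would handle the projection step. With Lagrangian $\tfrac12(\vtheta-\vtheta^{k+\frac12})^{T}\mL(\vtheta-\vtheta^{k+\frac12})+\mu\big(\va^{T}(\vtheta-\vtheta^{k})+b\big)$ and multiplier $\mu\geq0$, stationarity gives $\mL(\vtheta-\vtheta^{k+\frac12})+\mu\va=\vzero$, hence $\vtheta^{k+1}=\vtheta^{k+\frac12}-\mu\,\mL^{-1}\va$. The only real bookkeeping is that the constraint is expressed relative to $\vtheta^{k}$ while the projection objective is centered at $\vtheta^{k+\frac12}$; using the step-one result, $\va^{T}(\vtheta^{k+1}-\vtheta^{k})+b=\sqrt{\tfrac{2\delta}{\vg^{T}\mH^{-1}\vg}}\,\va^{T}\mH^{-1}\vg+b-\mu\,\va^{T}\mL^{-1}\va$. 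Complementary slackness then splits into two cases: if $\sqrt{\tfrac{2\delta}{\vg^{T}\mH^{-1}\vg}}\,\va^{T}\mH^{-1}\vg+b\leq0$ the unconstrained minimizer $\vtheta^{k+\frac12}$ is already feasible and $\mu=0$; otherwise the constraint is active and $\mu=\big(\sqrt{\tfrac{2\delta}{\vg^{T}\mH^{-1}\vg}}\,\va^{T}\mH^{-1}\vg+b\big)/(\va^{T}\mL^{-1}\va)>0$, the positivity coming from $\mL\succ0$. Both cases are captured by $\mu=\max\!\big(0,\,(\sqrt{\tfrac{2\delta}{\vg^{T}\mH^{-1}\vg}}\,\va^{T}\mH^{-1}\vg+b)/(\va^{T}\mL^{-1}\va)\big)$, and substituting back into $\vtheta^{k+1}=\vtheta^{k+\frac12}-\mu\,\mL^{-1}\va$ gives exactly the claimed expression.

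There is no deep obstacle: the content is entirely the careful two-case analysis for $\mu$ and keeping the two base points straight ($\vtheta^{k}$ for the linearized constraint, $\vtheta^{k+\frac12}$ for the projection metric). The only points requiring mild care are (i) justifying that the trust-region constraint in step one is active, which can fail only in the degenerate case $\vg=\vzero$, and (ii) invoking Slater's condition so that the KKT stationary points are genuinely the global optima rather than merely critical points; uniqueness of $\vtheta^{k+1}$ then follows from strict convexity of the projection objective (using invertibility of $\mL$), and uniqueness of $\vtheta^{k+\frac12}$ from invertibility of $\mH$.
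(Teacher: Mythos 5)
Your proposal is correct and follows essentially the same route as the paper's own proof: both form the Lagrangian for each convex subproblem, invoke Slater's condition so the KKT conditions are necessary and sufficient, solve the stationarity and complementary-slackness conditions for the multipliers, and compose the two closed-form solutions. Your treatment is in fact slightly more careful than the paper's on two minor points---explicitly justifying that the trust-region constraint is active (modulo the degenerate case $\vg=\vzero$) and spelling out the two-case analysis behind the $\max(0,\cdot)$ term---but the substance is identical.
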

\begin{proof}
For the first problem, since $\mH$ is the Fisher Information matrix, which automatically guarantees it is positive semi-definite.
Hence it is a convex program with quadratic inequality constraints.
Hence if the primal problem has a feasible point, 
then Slater’s condition is satisfied and strong duality holds. 
Let $\vtheta^{*}$ and $\lambda^*$ denote the solutions to the primal and dual problems, respectively.
In addition, the primal objective function  is continuously differentiable.
Hence the Karush-Kuhn-Tucker (KKT) conditions are necessary and sufficient for the optimality of $\vtheta^{*}$ and $\lambda^*.$
We now form the Lagrangian:
\[
\mathcal{L}(\vtheta,\lambda)=-\vg^{T}(\vtheta-\vtheta^{k})+\lambda\Big(\frac{1}{2}(\vtheta-\vtheta^{k})^{T}\mH(\vtheta-\vtheta^{k})- \delta\Big).
\]
And we have the following KKT conditions:
\begin{align}
   -\vg + \lambda^*\mH\vtheta^{*}-\lambda^*\mH\vtheta^{k}=0~~~~&~~~\nabla_\vtheta\mathcal{L}(\vtheta^{*},\lambda^{*})=0 \label{KKT_1}\\
   \frac{1}{2}(\vtheta^{*}-\vtheta^{k})^{T}\mH(\vtheta^{*}-\vtheta^{k})- \delta=0~~~~&~~~\nabla_\lambda\mathcal{L}(\vtheta^{*},\lambda^{*})=0 \label{KKT_2}\\
    \frac{1}{2}(\vtheta^{*}-\vtheta^{k})^{T}\mH(\vtheta^{*}-\vtheta^{k})-\delta\leq0~~~~&~~~\text{primal constraints}\label{KKT_3}\\
   \lambda^*\geq0~~~~&~~~\text{dual constraints}\label{KKT_4}\\
   \lambda^*\Big(\frac{1}{2}(\vtheta^{*}-\vtheta^{k})^{T}\mH(\vtheta^{*}-\vtheta^{k})-\delta\Big)=0~~~~&~~~\text{complementary slackness}\label{KKT_5}
\end{align}
By Eq.~(\ref{KKT_1}), we have $\vtheta^{*}=\vtheta^k+\frac{1}{\lambda^*}\mH^{-1}\vg.$ 
And by plugging Eq.~(\ref{KKT_1}) into Eq.~(\ref{KKT_2}), 
we have $\lambda^*=\sqrt{\frac{\vg^T\mH^{-1}\vg}{2\delta}}.$
Hence we have our optimal solution:
\begin{align}
\vtheta^{k+\frac{1}{2}}=\vtheta^{*}=\vtheta^k+\sqrt{\frac{2\delta}{\vg^T\mH^{-1}\vg}}\mH^{-1}\vg, \label{KKT_First}    
\end{align}
which also satisfies Eq.~(\ref{KKT_3}), Eq.~(\ref{KKT_4}), and Eq.~(\ref{KKT_5}).
Following the same reasoning, we now form the Lagrangian of the second problem:
\begin{align}
\mathcal{L}(\vtheta,\lambda)=\frac{1}{2}(\vtheta-{\vtheta}^{k+\frac{1}{2}})^{T}\mL(\vtheta-{\vtheta}^{k+\frac{1}{2}})+\lambda(\va^T(\vtheta-\vtheta^k)+b). \nonumber
\end{align}
And we have the following KKT conditions: 
\begin{align}
  \mL\vtheta^*-\mL\vtheta^{k+\frac{1}{2}}+\lambda^*\va=0~~~~&~~~\nabla_\vtheta\mathcal{L}(\vtheta^{*},\lambda^{*})=0 \label{KKT_6}\\
   \va^T(\vtheta^*-\vtheta^k)+b=0~~~~&~~~\nabla_\lambda\mathcal{L}(\vtheta^{*},\lambda^{*})=0 \label{KKT_7}\\
    \va^T(\vtheta^*-\vtheta^k)+b\leq0~~~~&~~~\text{primal constraints}\label{KKT_8}\\
   \lambda^*\geq0~~~~&~~~\text{dual constraints}\label{KKT_9}\\
   \lambda^*(\va^T(\vtheta^*-\vtheta^k)+b)=0~~~~&~~~\text{complementary slackness}\label{KKT_10}
\end{align}
By Eq.~(\ref{KKT_6}), we have $\vtheta^{*}=\vtheta^{k+1}+\lambda^*\mL^{-1}\va.$ 
And by plugging Eq.~(\ref{KKT_6}) into Eq.~(\ref{KKT_7}) and Eq.~(\ref{KKT_9}), 
we have $\lambda^*=\max(0,\frac{\va^T(\vtheta^{k+\frac{1}{2}}-\vtheta^{k})+b}{\va\mL^{-1}\va}).$
Hence we have our optimal solution:
\begin{align}
\vtheta^{k+1}=\vtheta^{*}=\vtheta^{k+\frac{1}{2}}-\max(0,\frac{\va^T(\vtheta^{k+\frac{1}{2}}-\vtheta^k)+b}{\va^T\mL^{-1}\va^T})\mL^{-1}\va,\label{KKT_Second}
\end{align}
which also satisfies Eq.~(\ref{KKT_8}) and Eq.~(\ref{KKT_10}).
Hence by Eq.~(\ref{KKT_First}) and Eq.~(\ref{KKT_Second}), we have 
\[
\vtheta^{k+1}=\vtheta^{k}+\sqrt{\frac{2\delta}{\vg^T\mH^{-1}\vg}}\mH^{-1}\vg\nonumber\\
-\max(0,\frac{\sqrt{\frac{2\delta}{\vg^T\mH^{-1}\vg}}\va^{T}\mH^{-1}\vg+b}{\va^T\mL^{-1}\va})\mL^{-1}\va.
\]
\end{proof}

\subsection{Proof of Theorem \ref{theorem:PCPO_converge}:~Stationary Points of PCPO with the KL divergence and $\normltwo$ Norm Projections}
\label{appendix:proof_theorem_3}
For our analysis, we make the following assumptions:
we \emph{minimize} the negative reward objective function $f: \R^n \rightarrow\R$ (We follow the convention of the literature that authors typically minimize the objective function). The function $f$ is $L$-smooth and twice continuously differentiable over the closed and convex constraint set $\mathcal{C}.$
We have the following lemma to characterize the projection and for the proof of Theorem \ref{theorem:PCPO_converge_appendix}. (See Fig. \ref{fig:projection_appendix} for semantic illustration.)

\begin{figure*}[t]
\centering
\includegraphics[scale=0.5]{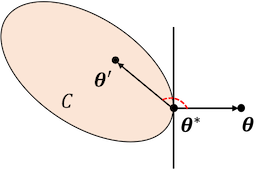}
\caption{
The projection onto the convex set with $\vtheta'\in\mathcal{C}$ and $\vtheta^*=\mathrm{Proj}^{\mL}_{\mathcal{C}}(\vtheta).$
}
\label{fig:projection_appendix}
\end{figure*}

\begin{lemma}
\label{lemma:projecion_appendix}
For any $\vtheta,$ $\vtheta^{*}=\mathrm{Proj}^{\mL}_{\mathcal{C}}(\vtheta)$ if and only if $(\vtheta-\vtheta^*)^T\mL(\vtheta'-\vtheta^*)\leq0, \forall\vtheta'\in\mathcal{C},$
where $\mathrm{Proj}^{\mL}_{\mathcal{C}}(\vtheta)\doteq\argmin\limits_{\vtheta'\in\mathcal{C}}||\vtheta-\vtheta'||^2_\mL,$ and $\mL=\mH$ if using the KL divergence projection, and $\mL=\mI$ if using the $\normltwo$ norm projection.
\end{lemma}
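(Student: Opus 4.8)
The plan is to prove both directions via the standard first-order optimality argument for the strictly convex quadratic $\vtheta'\mapsto\|\vtheta-\vtheta'\|_\mL^2$. I would first record that $\mL$ is symmetric and positive definite: for $\mL=\mI$ this is trivial, and for $\mL=\mH$ it follows because the Fisher information matrix is symmetric positive semi-definite and is assumed invertible. Consequently $\langle\vx,\vy\rangle_\mL\doteq\vx^T\mL\vy$ is a genuine inner product, $\|\cdot\|_\mL$ a strictly convex norm, and the projection $\mathrm{Proj}^{\mL}_{\mathcal{C}}(\vtheta)$ onto the closed convex set $\mathcal{C}$ exists and is unique.

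For the ($\Rightarrow$) direction I would fix an arbitrary $\vtheta'\in\mathcal{C}$ and use convexity of $\mathcal{C}$ to place the whole segment $\vtheta_t\doteq\vtheta^*+t(\vtheta'-\vtheta^*)$, $t\in[0,1]$, inside $\mathcal{C}$. Then $\phi(t)\doteq\|\vtheta-\vtheta_t\|_\mL^2$ is a scalar quadratic that is minimized over $[0,1]$ at $t=0$ by optimality of $\vtheta^*=\mathrm{Proj}^{\mL}_{\mathcal{C}}(\vtheta)$, so $\phi'(0)\ge0$. Differentiating gives $\phi'(0)=-2(\vtheta-\vtheta^*)^T\mL(\vtheta'-\vtheta^*)$, which is exactly the claimed variational inequality.

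For the ($\Leftarrow$) direction I would take any $\vtheta'\in\mathcal{C}$, write $\vtheta-\vtheta'=(\vtheta-\vtheta^*)+(\vtheta^*-\vtheta')$, and expand using symmetry of $\mL$:
\[
\|\vtheta-\vtheta'\|_\mL^2=\|\vtheta-\vtheta^*\|_\mL^2-2(\vtheta-\vtheta^*)^T\mL(\vtheta'-\vtheta^*)+\|\vtheta^*-\vtheta'\|_\mL^2\ge\|\vtheta-\vtheta^*\|_\mL^2,
\]
where the middle term is nonnegative by hypothesis and the last term is nonnegative by positive definiteness of $\mL$. Since $\vtheta'\in\mathcal{C}$ was arbitrary and $\vtheta^*\in\mathcal{C}$, this shows $\vtheta^*$ attains the minimum and therefore equals the unique projection.

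I do not expect a substantive obstacle: this is the textbook ``obtuse-angle'' characterization of metric projection onto a closed convex set, specialized to the $\mL$-inner product. The only points needing care are confirming that $\mL$ is genuinely positive definite (so that $\|\cdot\|_\mL$ is a strictly convex norm, which also secures existence and uniqueness of the projection), and, in the ($\Leftarrow$) direction, treating $\vtheta^*\in\mathcal{C}$ as part of the hypothesis so that it is an admissible candidate for the $\argmin$.
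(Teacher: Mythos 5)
Your proposal is correct and follows essentially the same route as the paper's proof: the forward direction uses the same perturbation $\vtheta^*+\alpha(\vtheta'-\vtheta^*)$ along the segment into $\mathcal{C}$ (your derivative at $t=0$ is just the paper's limit $\alpha\to0$ phrased differently), and the backward direction uses the identical expansion of $\|\vtheta-\vtheta'\|^2_{\mL}$. Your explicit remarks that $\mL$ must be positive definite (so the projection exists and is unique) and that $\vtheta^*\in\mathcal{C}$ must be assumed in the converse are small points of care that the paper leaves implicit, but the substance is the same.
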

\begin{proof}

$(\Rightarrow)$ Let $\vtheta^{*}=\mathrm{Proj}^{\mL}_{\mathcal{C}}(\vtheta)$ for a given $\vtheta \not\in\mathcal{C},$ $\vtheta'\in\mathcal{C}$ be such that $\vtheta'\neq\vtheta^*,$ and $\alpha\in(0,1).$ Then we have
\begin{align}
    ||\vtheta-\vtheta^*||^2_\mL&\leq||\vtheta-\big(\vtheta^*+\alpha(\vtheta'-\vtheta^*)\big)||^2_\mL \nonumber\\
    &=||\vtheta-\vtheta^*||^2_\mL + \alpha^2||\vtheta'-\vtheta^*||^2_\mL-2\alpha(\vtheta-\vtheta^*)^T\mL(\vtheta'-\vtheta^*) \nonumber\\
    \Rightarrow (\vtheta-\vtheta^*)^T\mL(\vtheta'-\vtheta^*)&\leq \frac{\alpha}{2}||\vtheta'-\vtheta^*||^2_\mL. \label{eq:appendix_lemmaD1_0}
\end{align}
Since the right hand side of Eq. (\ref{eq:appendix_lemmaD1_0}) can be made arbitrarily small for a given $\alpha$, and hence we have:
\[
(\vtheta-\vtheta^*)^T\mL(\vtheta'-\vtheta^*)\leq0, \forall\theta'\in\mathcal{C}.
\]

$(\Leftarrow)$ Let $\vtheta^*\in\mathcal{C}$ be such that $(\vtheta-\vtheta^*)^T\mL(\vtheta'-\vtheta^*)\leq0, \forall\theta'\in\mathcal{C}.$ We show that $\vtheta^*$ must be the optimal solution. Let $\vtheta'\in\mathcal{C}$ and $\vtheta'\neq\vtheta^*.$ Then we have
\begin{align}
    ||\vtheta-\vtheta'||^2_\mL - ||\vtheta-\vtheta^*||^2_\mL &=||\vtheta-\vtheta^*+\vtheta^*-\vtheta'||^2_\mL -||\vtheta-\vtheta^*||^2_\mL \nonumber\\
    &=||\vtheta-\vtheta^*||^2_\mL + ||\vtheta'-\vtheta^*||^2_\mL - 2(\vtheta-\vtheta^*)^T\mL(\vtheta'-\vtheta^*) - ||\vtheta-\vtheta^*||^2_\mL\nonumber\\
    &>0\nonumber\\
    \Rightarrow ||\vtheta-\vtheta'||^2_\mL&>||\vtheta-\vtheta^*||^2_\mL.\nonumber
\end{align}
Hence, $\vtheta^*$ is the optimal solution to the optimization problem, and $\vtheta^*=\mathrm{Proj}^{\mL}_{\mathcal{C}}(\vtheta).$
\end{proof}

Based on Lemma \ref{lemma:projecion_appendix}, we have the following theorem.
\begin{theorem}
\label{theorem:PCPO_converge_appendix}
Let $\eta\doteq \sqrt{\frac{2\delta}{\vg^{T}\mH^{-1}\vg}}$ in Eq.~(\ref{eq:PCPO_final}), where $\delta$ is the step size for reward improvement,
$\vg$ is the gradient of $f,$
$\mH$ is the Fisher information matrix.
Let $\sigma_\mathrm{max}(\mH)$ be the largest singular value of $\mH,$
and $\va$ be the gradient of cost advantage function in Eq.~(\ref{eq:PCPO_final}).
Then PCPO with the KL divergence projection converges to stationary points with $\vg\in-\va$ (i.e., the gradient of $f$ belongs to the negative gradient of the cost advantage function). The objective value changes by
\begin{align}
f(\vtheta^{k+1})\leq f(\vtheta^{k})+||\vtheta^{k+1}-\vtheta^{k}||^2_{-\frac{1}{\eta}\mH+\frac{L}{2}\mI}.\label{eq:bound1}  
\end{align}

PCPO with the $\normltwo$ norm projection converges to stationary points with $\mH^{-1}\vg\in-\va$~(i.e., the product of the inverse of $\mH$ and gradient of $f$ belongs to the negative gradient of the cost advantage function). If $\sigma_\mathrm{max}(\mH)\leq1,$ then the objective value changes by
\begin{align}
f(\vtheta^{k+1})\leq f(\vtheta^{k})+(\frac{L}{2}-\frac{1}{\eta})||\vtheta^{k+1}-\vtheta^{k}||^2_2.\label{eq:bound2}  
\end{align}

\end{theorem}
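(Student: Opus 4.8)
The plan is to analyze the two cases (KL divergence projection and $\normltwo$ norm projection) separately, in each case first characterizing the stationary points via the KKT/Lagrangian conditions already derived in Eq.~(\ref{eq:PCPO_final}), and then bounding the one-step change in $f$ using $L$-smoothness together with Lemma~\ref{lemma:projecion_appendix}.

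First I would establish the stationarity characterizations. At a stationary point the reward-improvement step produces $\vtheta^{k+\frac12}$ along $\mH^{-1}\vg$, and the projection step subtracts a nonnegative multiple of $\mL^{-1}\va$. A point is stationary when the net update vanishes, i.e.\ when the reward ascent direction is exactly cancelled by the projection correction. Writing out Eq.~(\ref{eq:PCPO_final}) with $\vtheta^{k+1}=\vtheta^{k}$ forces $\eta\mH^{-1}\vg = c\,\mL^{-1}\va$ for some $c\ge 0$. For the KL projection $\mL=\mH$, so this reads $\vg = \alpha\va$ with the sign bookkeeping giving the stated Lagrangian condition $\vg=-\alpha\va$, $\alpha\ge0$ (the sign flips because $\va$ is the gradient of the \emph{cost} constraint being pushed down); for the $\normltwo$ projection $\mL=\mI$, the condition becomes $\mH^{-1}\vg=-\alpha\va$. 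I would note that a stationary point either lies in the interior of $\mathcal{C}$ (then $\vg=\vzero$, an ordinary stationary point of $f$) or on the boundary (then the Lagrangian condition holds with $\alpha>0$).

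Next, for the descent inequalities I would use $L$-smoothness: $f(\vtheta^{k+1})\le f(\vtheta^{k}) + \vg^T(\vtheta^{k+1}-\vtheta^{k}) + \frac{L}{2}\|\vtheta^{k+1}-\vtheta^{k}\|_2^2$. The crux is to show $\vg^T(\vtheta^{k+1}-\vtheta^{k}) \le -\frac{1}{\eta}\|\vtheta^{k+1}-\vtheta^{k}\|_{\mH}^2$ in the KL case. Here I would use the projection variational inequality from Lemma~\ref{lemma:projecion_appendix} with $\vtheta = \vtheta^{k+\frac12} = \vtheta^{k}+\eta\mH^{-1}\vg$, $\vtheta^* = \vtheta^{k+1}$, and $\vtheta' = \vtheta^{k}\in\mathcal{C}$ (valid because $\pi^k$ is feasible in the analysis, or at least one feasible point exists): $(\vtheta^{k+\frac12}-\vtheta^{k+1})^T\mL(\vtheta^{k}-\vtheta^{k+1})\le 0$. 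Substituting $\vtheta^{k+\frac12}-\vtheta^{k} = \eta\mH^{-1}\vg$ and rearranging yields $\eta\vg^T(\vtheta^{k+1}-\vtheta^{k}) \le -(\vtheta^{k+1}-\vtheta^{k})^T\mL(\vtheta^{k+1}-\vtheta^{k})$, i.e.\ $\vg^T(\vtheta^{k+1}-\vtheta^{k})\le -\frac1\eta\|\vtheta^{k+1}-\vtheta^{k}\|_{\mL}^2$. Plugging this into the smoothness bound with $\mL=\mH$ gives Eq.~(\ref{eq:bound1}). For the $\normltwo$ case, the same variational inequality with $\mL=\mI$ gives $\vg^T(\vtheta^{k+1}-\vtheta^{k})\le -\frac1\eta\|\vtheta^{k+1}-\vtheta^{k}\|_2^2$, but now the reward-step displacement enters as $\eta\mH^{-1}\vg$ while the inequality is in the identity metric, producing a mismatch; I would control the resulting cross term by $\sigma_{\max}(\mH)\le 1$ (so that $\|\cdot\|_{\mH}\le\|\cdot\|_2$ or an analogous comparison absorbs the factor), yielding the coefficient $\frac{L}{2}-\frac1\eta$ in Eq.~(\ref{eq:bound2}).

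The main obstacle I anticipate is the $\normltwo$ case: because the reward ascent uses the preconditioned direction $\mH^{-1}\vg$ but the projection metric is the identity, the inner product $\vg^T(\vtheta^{k+1}-\vtheta^{k})$ does not telescope as cleanly as in the KL case, and one must carefully insert the spectral bound $\sigma_{\max}(\mH)\le 1$ at exactly the right place to keep the sign. Getting the direction of all inequalities consistent with the $\max(0,\cdot)$ clipping in Eq.~(\ref{eq:PCPO_final}) — in particular handling the case where the projection is inactive (the clipped term is zero, so $\vtheta^{k+1}=\vtheta^{k+\frac12}$ and the bound reduces to the standard projected-gradient estimate) — will also require care, but it is routine once the active case is done.
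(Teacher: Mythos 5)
Your proposal follows essentially the same route as the paper's proof: stationary points are characterized by setting the net update in Eq.~(\ref{eq:PCPO_final}) to zero, and the descent bounds come from combining $L$-smoothness with the variational inequality of Lemma~\ref{lemma:projecion_appendix} applied at $\vtheta'=\vtheta^{k}$, using $\sigma_\mathrm{max}(\mH)\leq1$ to bridge the metric mismatch in the $\normltwo$ case exactly as the paper does. The only wrinkle is a sign-convention slip — since $\vg=\nabla f$ for the \emph{minimized} objective, the reward step is $\vtheta^{k+\frac{1}{2}}=\vtheta^{k}-\eta\mH^{-1}\vg$, not $+\eta\mH^{-1}\vg$ — but the key inequality $\vg^{T}(\vtheta^{k+1}-\vtheta^{k})\leq-\frac{1}{\eta}\|\vtheta^{k+1}-\vtheta^{k}\|^{2}_{\mL}$ you state is the correct one and matches the paper.
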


\begin{proof}
The proof of the theorem is based on working in a Hilbert space and the non-expansive property of the projection.
We first prove stationary points for PCPO with the KL divergence and $\normltwo$ norm projections, and then prove the change of the objective value.
When in stationary points $\vtheta^*$,
we have 
\begin{align}
&\vtheta^{*}=\vtheta^{*}-\sqrt{\frac{2\delta}{\vg^T\mH^{-1}\vg}}\mH^{-1}\vg
-\max(0,\frac{\sqrt{\frac{2\delta}{\vg^T\mH^{-1}\vg}}\va^{T}\mH^{-1}\vg+b}{\va^T\mL^{-1}\va})\mL^{-1}\va. \nonumber\\
\Leftrightarrow &\sqrt{\frac{2\delta}{\vg^T\mH^{-1}\vg}}\mH^{-1}\vg  = -\max(0,\frac{\sqrt{\frac{2\delta}{\vg^T\mH^{-1}\vg}}\va^{T}\mH^{-1}\vg+b}{\va^T\mL^{-1}\va})\mL^{-1}\va \nonumber\\
\Leftrightarrow & \mH^{-1}\vg \in -\mL^{-1}\va.
\label{eq:appendixStationary}
\end{align}

For the KL divergence projection ($\mL=\mH$),
Eq.~(\ref{eq:appendixStationary}) boils down to 
$\vg\in-\va,$
and for the $\normltwo$ norm projection ($\mL=\mI$),
Eq.~(\ref{eq:appendixStationary}) is equivalent to
$\mH^{-1}\vg\in-\va.$
Now we prove the second part of the theorem. Based on Lemma \ref{lemma:projecion_appendix}, for the KL divergence projection, we have 
\begin{align}
   (\vtheta^{k}-\vtheta^{k+1})^T\mH(\vtheta^k-\eta\mH^{-1}\vg-\vtheta^{k+1})\leq0\nonumber\\
   \Rightarrow\vg^T(\vtheta^{k+1}-\vtheta^{k})\leq -\frac{1}{\eta}||\vtheta^{k+1}-\vtheta^{k}||^2_\mH.
   \label{eq:appendix_converge_0}
\end{align}
By Eq. (\ref{eq:appendix_converge_0}), and $L$-smooth continuous function $f,$ we have 
\begin{align}
    f(\vtheta^{k+1})&\leq f(\vtheta^{k})+\vg^T(\vtheta^{k+1}-\vtheta^{k})+\frac{L}{2}||\vtheta^{k+1}-\vtheta^k||^2_2 \nonumber\\
    &\leq f(\vtheta^{k})-\frac{1}{\eta}||\vtheta^{k+1}-\vtheta^{k}||^2_\mH+\frac{L}{2}||\vtheta^{k+1}-\vtheta^k||^2_2\nonumber\\
    &=f(\vtheta^{k})+(\vtheta^{k+1}-\vtheta^{k})^T(-\frac{1}{\eta}\mH+\frac{L}{2}\mI)(\vtheta^{k+1}-\vtheta^{k})\nonumber\\
    &=f(\vtheta^{k})+||\vtheta^{k+1}-\vtheta^{k}||^2_{-\frac{1}{\eta}\mH+\frac{L}{2}\mI}.\nonumber
\end{align}

For the $\normltwo$ norm projection, we have 
\begin{align}
   (\vtheta^{k}-\vtheta^{k+1})^T(\vtheta^k-\eta\mH^{-1}\vg-\vtheta^{k+1})\leq0\nonumber\\
   \Rightarrow\vg^T\mH^{-1}(\vtheta^{k+1}-\vtheta^{k})\leq -\frac{1}{\eta}||\vtheta^{k+1}-\vtheta^{k}||^2_2.
   \label{eq:appendix_converge_1}
\end{align}
By Eq. (\ref{eq:appendix_converge_1}), $L$-smooth continuous function $f,$ and if $\sigma_\mathrm{max}(\mH)\leq1,$ we have 
\begin{align}
    f(\vtheta^{k+1})&\leq f(\vtheta^{k})+\vg^T(\vtheta^{k+1}-\vtheta^{k})+\frac{L}{2}||\vtheta^{k+1}-\vtheta^k||^2_2 \nonumber\\
    &\leq f(\vtheta^{k})+(\frac{L}{2}-\frac{1}{\eta})||\vtheta^{k+1}-\vtheta^{k}||^2_2.\nonumber
\end{align}

To see why we need the assumption of $\sigma_\mathrm{max}(\mH)\leq1,$ we define $\mH=\mU\mSigma\mU^T$ as the singular value decomposition of $\mH$ with $\vu_i$ being the column vector of $\mU.$ Then we have
\begin{align}
    \vg^T\mH^{-1}(\vtheta^{k+1}-\vtheta^{k})
    &=\vg^T\mU\mSigma^{-1}\mU^T(\vtheta^{k+1}-\vtheta^{k}) \nonumber\\
    &=\vg^T(\sum_{i}\frac{1}{\sigma_i(\mH)}\vu_i\vu_i^T)(\vtheta^{k+1}-\vtheta^{k})\nonumber\\
    &=\sum_{i}\frac{1}{\sigma_i(\mH)}\vg^T(\vtheta^{k+1}-\vtheta^{k}).\nonumber
\end{align}
If we want to have 
\[
\vg^T(\vtheta^{k+1}-\vtheta^{k})\leq\vg^T\mH^{-1}(\vtheta^{k+1}-\vtheta^{k})\leq -\frac{1}{\eta}||\vtheta^{k+1}-\vtheta^{k}||^2_2,
\]
then every singular value $\sigma_i(\mH)$ of $\mH$ needs to be smaller than $1$, and hence $\sigma_\mathrm{max}(\mH)\leq1,$ which justifies the assumption we use to prove the bound.
\end{proof}

To make the objective value for PCPO with the KL divergence projection improves, the right hand side of Eq. (\ref{eq:bound1}) needs to be negative. Hence we have $\frac{L\eta}{2}\mI\prec\mH,$ implying that $\sigma_\mathrm{min}(\mH)>\frac{L\eta}{2}.$
And to make the objective value for PCPO with the $\normltwo$ norm projection improves, the right hand side of Eq. (\ref{eq:bound2}) needs to be negative. Hence we have $\eta<\frac{2}{L},$ implying that 
\begin{align}
    &\eta = \sqrt{\frac{2\delta}{\vg^T\mH^{-1}\vg}}<\frac{2}{L}\nonumber\\
     \Rightarrow& \frac{2\delta}{\vg^T\mH^{-1}\vg} < \frac{4}{L^2} \nonumber\\
     \Rightarrow& \frac{\vg^{T}\mH^{-1}\vg}{2\delta}>\frac{L^2}{4}\nonumber\\
     \Rightarrow& \frac{L^2\delta}{2}<\vg^T\mH^{-1}\vg\nonumber\\
     &\leq||\vg||_2||\mH^{-1}\vg||_2\nonumber\\
     &\leq||\vg||_2||\mH^{-1}||_2||\vg||_2\nonumber\\
     &=\sigma_\mathrm{max}(\mH^{-1})||\vg||^2_2\nonumber\\
     &=\sigma_\mathrm{min}(\mH)||\vg||^2_2\nonumber\\
     \Rightarrow&\sigma_\mathrm{min}(\mH)>\frac{L^2\delta}{2||\vg||^2_2}.
     \label{eq:bound_3}
\end{align}
By the definition of the condition number and Eq. (\ref{eq:bound_3}), we have 
\begin{align}
    \frac{1}{\sigma_\mathrm{min}(\mH)}&<\frac{2||\vg||_2^2}{L^2\delta}\nonumber\\
\Rightarrow\frac{\sigma_\mathrm{max}(\mH)}{\sigma_\mathrm{min}(\mH)}&<\frac{2||\vg||^2_2\sigma_\mathrm{max}(\mH)}{L^2\delta}\nonumber\\
&\leq \frac{2||\vg||_2^2}{L^2\delta},\nonumber
\end{align}{}
which justifies what we discuss.

\subsection{Additional Computational Experiments}
\label{appendix:additionalExperiment}

\subsubsection{Implementation Details}
For detailed explanation of the task in \citet{achiam2017constrained}, please refer to the appendix of \citet{achiam2017constrained}.
For detailed explanation of the task in \citet{vinitsky2018benchmarks}, please refer to \citet{vinitsky2018benchmarks}.
%
%

%
We use neural networks that take the input of state, and output the mean and variance to be the Gaussian policy in all experiments. 
For the simulations in the gather and circle tasks, we use a neural network with two hidden layers of size $(64, 32).$ For the simulations in the grid and bottleneck tasks, we use a neural network with two hidden layers of size $(16, 16)$ and $(50,25),$ respectively. 
We use $\mathrm{tanh}$ as the activation function of the neural network.

We use GAE-$\lambda$ approach \citep{schulman2015high} to estimate $A^\pi_{R}(s,a)$ and $A^\pi_{C}(s,a)$. 
For the simulations in the gather and circle tasks, we use neural network baselines with the same architecture and activation functions as the policy networks.
For the simulations in the grid and bottleneck tasks, we use linear baselines.

The hyperparameters of each task for all algorithms are as follows (PC: point circle, PG: point gather, AC: ant circle, AG: ant gather, Gr: grid, and BN: bottleneck tasks): 

\begin{table}[h]
\centering
\vspace{0.0in}
\centering
\scalebox{1.0}{
\begin{tabular}{ccccccc}
\toprule
Parameter                                      & PC & PG & AC & AG & Gr & BN \\  \hline
\multirow{1}{*}{discount factor~$\gamma$}      & 0.995 & 0.995 & 0.995 & 0.995 & 0.999 & 0.999 \\
\multirow{1}{*}{step size~$\delta$}             & $10^{-4}$ & $10^{-4}$ & $10^{-4}$ & $10^{-4}$ & $10^{-4}$ & $10^{-4}$ \\
\multirow{1}{*}{$\lambda^\mathrm{GAE}_{R}$}    & 0.95 & 0.95 & 0.95 & 0.95 & 0.97 & 0.97 \\
\multirow{1}{*}{$\lambda^\mathrm{GAE}_{C}$}    & 1.0 & 1.0 & 0.5 & 0.5 & 0.5 & 1.0 \\
\multirow{1}{*}{Batch size}                    & 50,000 & 50,000 & 100,000 & 100,000 & 10,000 & 25,000 \\
\multirow{1}{*}{Rollout length}                & 50 & 15 & 500 & 500 & 400 & 500 \\
\multirow{1}{*}{Cost constraint threshold~$h$} & 5 & 0.1 & 10 & 0.2 & 0 & 0 \\
\bottomrule
\end{tabular}}
\end{table}
Note that we do not use a learned model to predict the probability of entering an undesirable state within a fixed time horizon as CPO did for cost shaping.

%
%
%

\begin{figure*}[t]
\centering
\subfloat[Point circle\label{subfig:pc}]{\begin{tabular}[b]{c}%
\includegraphics[scale=0.26]{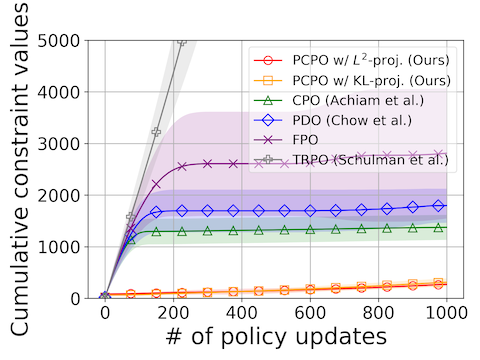}\\
\includegraphics[scale=0.26]{figure/RewardvsCost_pc.png}
\end{tabular}}
\subfloat[Point gather\label{subfig:pg}]{\begin{tabular}[b]{c}%
\includegraphics[scale=0.26]{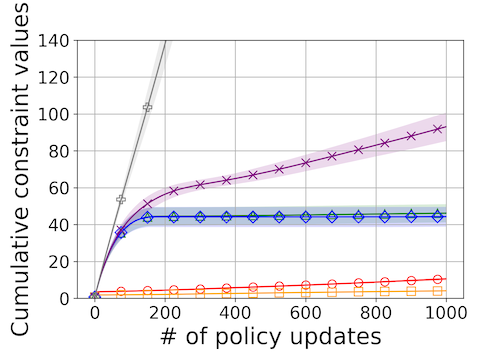}\\
\includegraphics[scale=0.26]{figure/RewardvsCost_pg.png}
\end{tabular}}
\subfloat[Ant circle\label{subfig:ac}]{\begin{tabular}[b]{c}%
\includegraphics[scale=0.26]{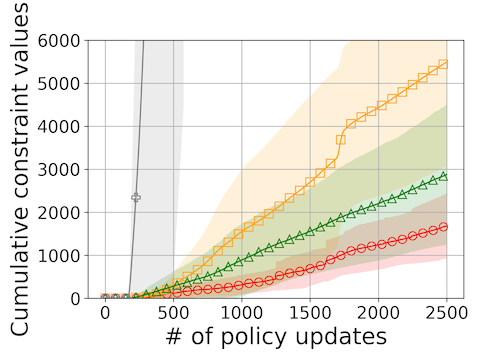}\\
\includegraphics[scale=0.26]{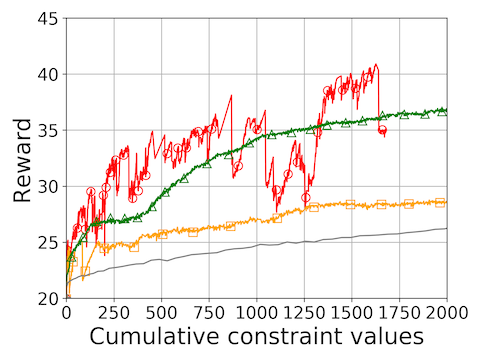}
\end{tabular}}
\\
\subfloat[Ant gather\label{subfig:ag}]{\begin{tabular}[b]{c}%
\includegraphics[scale=0.26]{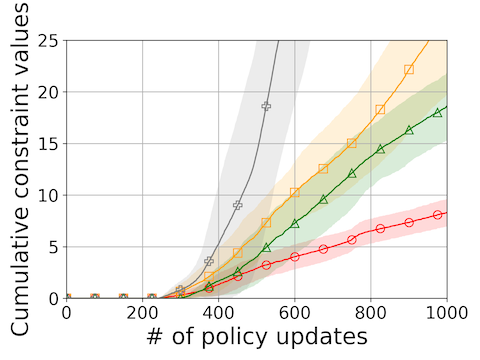}\\
\includegraphics[scale=0.26]{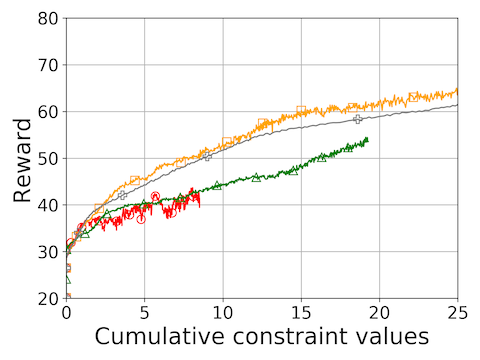}
\end{tabular}}
\subfloat[Grid\label{subfig:grid2}]{\begin{tabular}[b]{c}%
\includegraphics[scale=0.26]{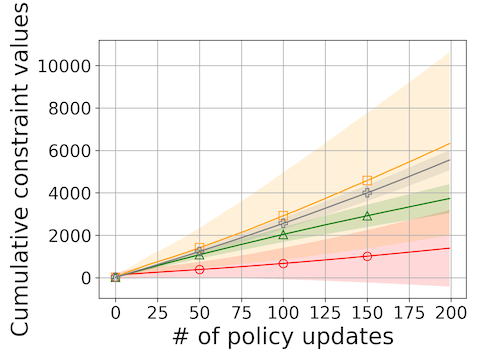}\\
\includegraphics[scale=0.26]{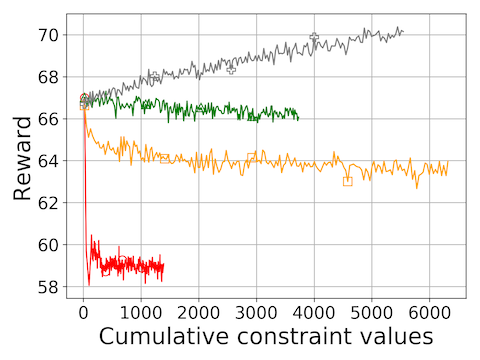}
\end{tabular}}
\subfloat[Bottleneck\label{subfig:bn}]{\begin{tabular}[b]{c}%
\includegraphics[scale=0.26]{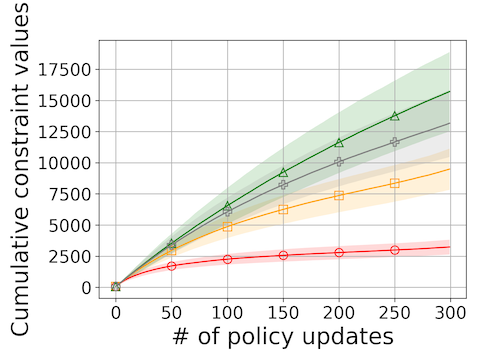}\\
\includegraphics[scale=0.26]{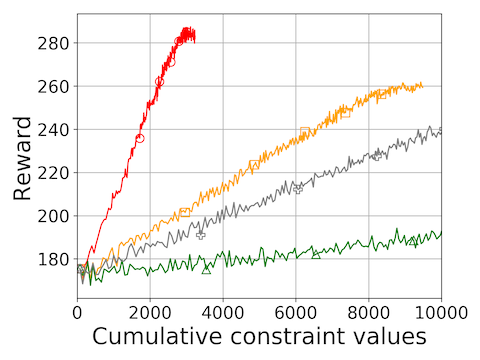}
\end{tabular}}
\caption{The values of the cumulative constraint value over policy update, and the reward versus the cumulative constraint value for the tested algorithms and task pairs.
The solid line is the mean and the shaded area is the standard deviation, over five runs. The curves for baseline oracle, TRPO, indicate the performance when the constraint is ignored. (Best viewed in color, and the legend is shared across all the figures.)
}
\label{fig:performance_appendix}
\end{figure*}

\begin{figure}[t]
\centering
\subfloat[Point circle\label{subfig:pc_withHarderConstraint}]{\begin{tabular}[b]{c}%
\includegraphics[scale=0.30]{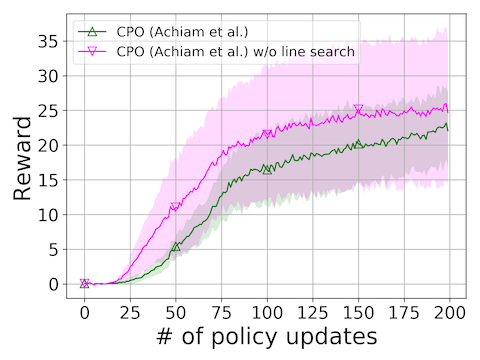}\\
\includegraphics[scale=0.30]{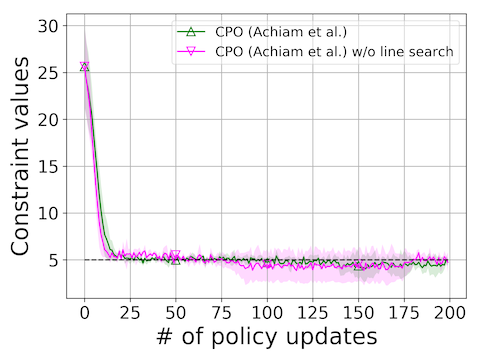}
\end{tabular}}
\subfloat[Point gather\label{subfig:ac_withHarderConstraint}]{\begin{tabular}[b]{c}%
\includegraphics[scale=0.30]{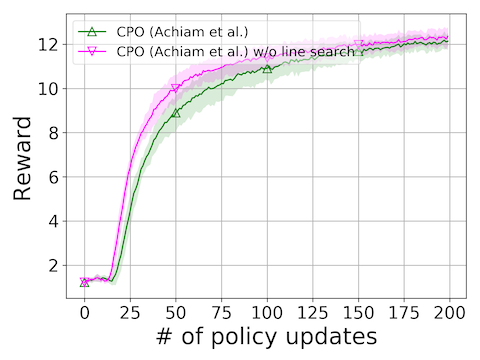}\\
\includegraphics[scale=0.30]{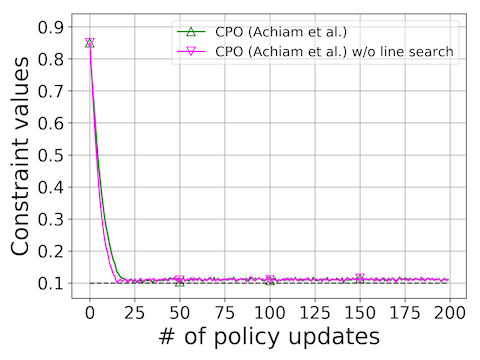}
\end{tabular}}ƒcon
\caption{The values of the reward and the constraint value for the tested algorithms and task pairs.
The solid line is the mean and the shaded area is the standard deviation, over five runs.  
The dash line in the cost constraint plot is the cost constraint threshold $h$. 
Line search helps to stabilize the training. (Best viewed in color)
}
\label{fig:performance_compareToLineSearch}
\end{figure}
 
\begin{figure}[t]
\centering
\subfloat[Point circle\label{subfig:pc_withHarderConstraint}]{\begin{tabular}[b]{c}%
\includegraphics[scale=0.30]{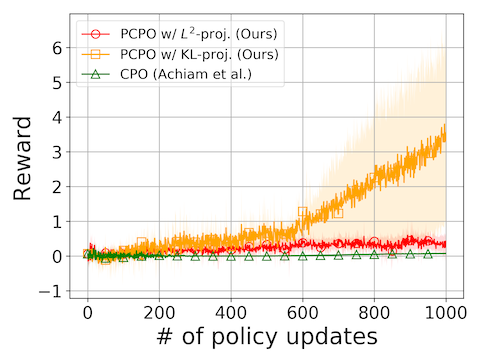}\\
\includegraphics[scale=0.30]{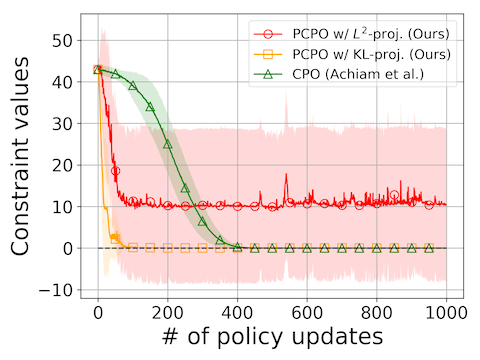}
\end{tabular}}
\subfloat[Point gather\label{subfig:ac_withHarderConstraint}]{\begin{tabular}[b]{c}%
\includegraphics[scale=0.30]{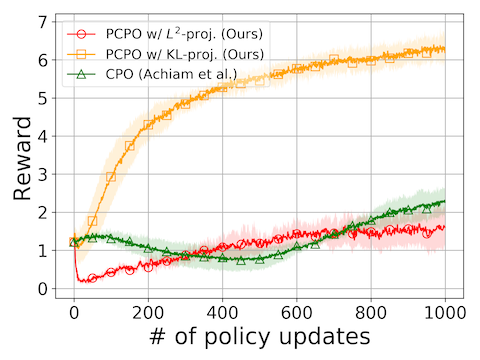}\\
\includegraphics[scale=0.30]{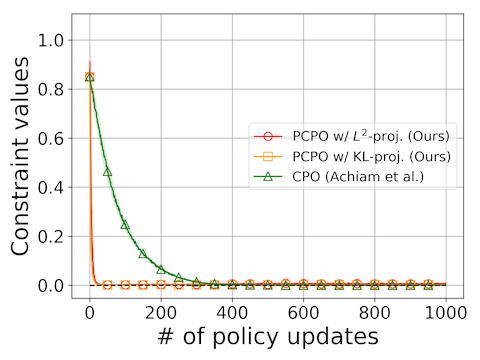}
\end{tabular}}
\caption{The values of the reward and the constraint value for the tested algorithms and task pairs.
The solid line is the mean and the shaded area is the standard deviation, over five runs.  
The dash line in the cost constraint plot is the cost constraint threshold $h$. 
PCPO with KL divergence projection is the only one that can satisfy the constraint with the highest reward. (Best viewed in color)
}
\label{fig:performance_threholds}
\end{figure}

\subsubsection{Experiment Results}
To examine the performance of the algorithms with different metrics, 
we provide the learning curves of the cumulative constraint value over policy update, and the reward versus the cumulative constraint value for the tested algorithms and task pairs in Section \ref{sec:experiments} shown in Fig. \ref{fig:performance_appendix}.  
The second metric enables us to compare the reward difference under the same number of cumulative constraint violation.

Overall, we find that, 
\begin{itemize}
\item [(a)] CPO has more cumulative constraint violation than PCPO.
\item [(b)] PCPO with $\normltwo$ norm projection has less cumulative constraint violation than KL divergence projection except for the point circle and point gather tasks. 
This observation suggests that the Fisher information matrix is not well-estimated in the high dimensional policy space, leading to have more constraint violation.
\item [(c)] PCPO has more reward improvement compared to CPO under the same number of cumulative constraint violation in point circle, point gather, ant circle, ant gather, and bottleneck task.
\end{itemize}
%

\subsubsection{CPO without Line Search}
Due to approximation errors, CPO performs line search to check whether the updated policy satisfies the trust region and cost constraints. 
To understand the necessity of line search in CPO,
we conducted the experiment with and without line search shown in Fig. \ref{fig:performance_compareToLineSearch}.
The step size $\delta$ is set to $0.01.$
We find that CPO without line search tends to 
(1) have large reward variance especially in the point circle task,
and (2) learn constraint-satisfying policies slightly faster.
These observations suggest that line search is more conservative in optimizing the policies since it usually take smaller steps.
However, we conjecture that if using smaller $\delta$, the effect of line search is not significant.

\subsubsection{The Tasks with Harder Constraints}
To understand the stability of PCPO and CPO when deployed in more constraint-critical tasks, 
we increase the difficulty of the task by setting the constraint threshold to zero and reduce the safe area.
The learning curve of discounted reward and constraint value over policy updates are shown in Fig. \ref{fig:performance_threholds}.
We observe that even with more difficult constraint, PCPO still has more reward improvement and constraint satisfaction than CPO, whereas CPO needs more feasible recovery steps to satisfy the constraint. 
In addition, we observe that PCPO with $\normltwo$ norm projection has high constraint variance in point circle task, suggesting that the reward update direction is not well aligned with the cost update direction. 
We also observe that PCPO with $\normltwo$ norm projection converges to a bad local optimum in terms of reward in point gather task, suggesting that in order to satisfy the constraint, the cost update direction destroys the reward update direction. 

\subsubsection{Smaller Batch Samples}
To learn policies under constraints,
PCPO and CPO require to have a good estimation of the constraint set.
However, PCPO may project the policy onto the space that violates the constraint due to the assumption of approximating the constraint set by linear half space constraint.  
To understand whether the estimation accuracy of the constraint set affects the performance,
we conducted the experiments with batch sample size reducing to $1\%$ of the previous experiments (only 500 samples for each policy update) shown in Fig. \ref{fig:performance_smallerBatchSample}.
We find that smaller training samples affects the performance of the algorithm, creating more reward and cost fluctuation.
However, we observe that even with smaller training samples, PCPO still has more reward improvement and constraint satisfaction than CPO. 

\begin{figure}[t]
\centering
\subfloat[Point circle\label{subfig:pc_withHarderConstraint}]{\begin{tabular}[b]{c}%
\includegraphics[scale=0.30]{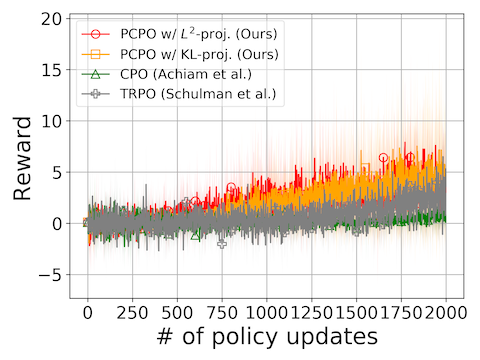}\\
\includegraphics[scale=0.30]{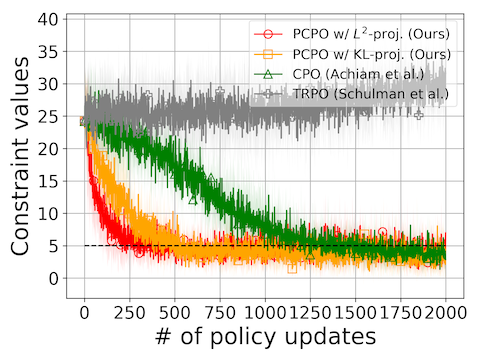}
\end{tabular}}
\subfloat[Point gather\label{subfig:ac_withHarderConstraint}]{\begin{tabular}[b]{c}%
\includegraphics[scale=0.30]{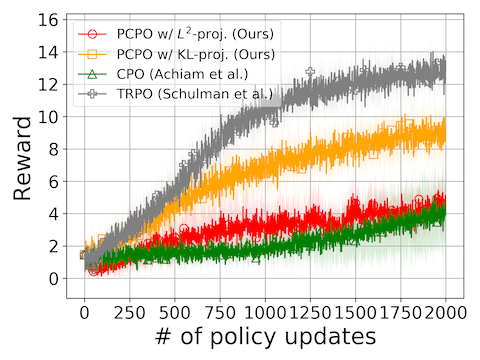}\\
\includegraphics[scale=0.30]{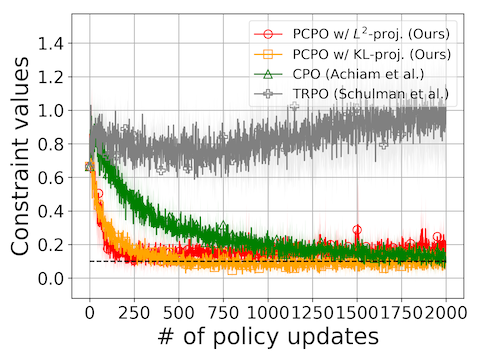}
\end{tabular}}
\caption{The values of the reward and the constraint value for the tested algorithms and task pairs.
The solid line is the mean and the shaded area is the standard deviation, over five runs.  
The dash line in the cost constraint plot is the cost constraint threshold $h$. 
The curves for baseline oracle, TRPO, indicate the reward and constraint violation values when the constraint is ignored.
We only use $1\%$ of samples compared to the previous simulations for each policy update. 
PCPO still satisfies the constraints quickly even when the constraint set is not well-estimated. (Best viewed in color)
}
\label{fig:performance_smallerBatchSample}
\end{figure}
\subsection{Analysis of the Approximation Error and the Computational Cost of the Conjugate Gradient Method}
\label{appendix:AnalysisoftheFisherInformationMatrix}
In the Grid task, we observe that PCPO with KL divergence projection does worse in reward than TRPO, which is expected since TRPO ignores constraints. However, TRPO actually outperforms PCPO with KL divergence projection in terms of constraint, which is unexpected since by trying to consider the constraint, PCPO with KL divergence projection has made constraint satisfaction worse. 
The reason for this observation is that the Fisher information matrix is ill-conditioned, \ie the condition number ${\lambda_{\mathrm{max}}(\mH)}/{\lambda_{\mathrm{min}}(\mH)}$ ($\lambda_{\mathrm{max}}$ is the largest eigenvalue of the matrix) of the Fisher information matrix is large, causing conjugate gradient method that computes constraint update direction $\mH^{-1}\va$ with small number of iteration output the inaccurate approximation.
Hence the inaccurate approximation of $\mH^{-1}\va$ cause PCPO with KL divergence projection have more constraint violation than TRPO.
To solve this issue, one can have more epochs of conjugate gradient method. 
This is because that the convergence of conjugate gradient method is controlled by the condition number \citep{shewchuk1994introduction}; 
the larger the condition number is, the more epochs the algorithm needs to get accurate approximation. 
In our experiments, we set the number of iteration of conjugate gradient method to be 10 to tradeoff between the computational efficiency and the accuracy across all tested algorithms and task pairs.
To verify our observation, we compare the condition number of the Fisher information matrix, and the approximation error of the constraint update direction over training epochs with different number of iteration of the conjugate gradient method shown in Fig. \ref{fig:performance_Additiona_Analysis_for_Grid_Task}.
We observe that the Fisher information matrix is ill-conditioned, and the one with larger number of iteration has less error and more constraint satisfaction. 
This observation confirms our discussion.

%
%
%
%
%

%
%

\begin{figure}[t]
\centering
\includegraphics[scale=0.30]{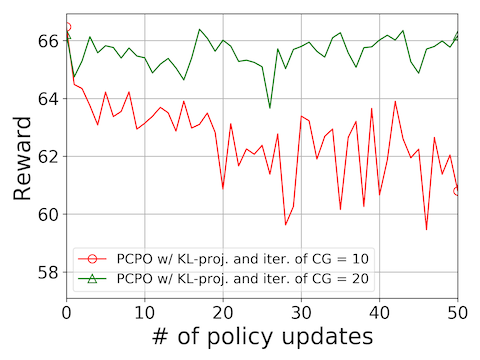}
\includegraphics[scale=0.30]{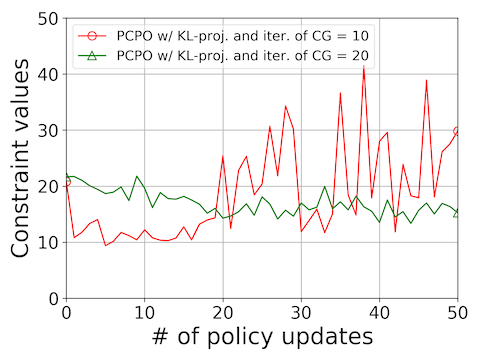}\\
\includegraphics[scale=0.30]{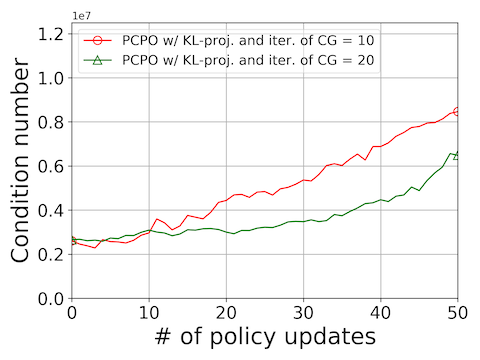}
\includegraphics[scale=0.30]{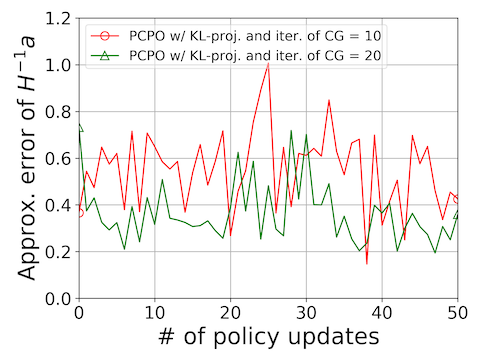}
\caption{
(1) The values of the reward and the constraint,
(2) the condition number of the Fisher information matrix, 
and (3) the approximation error of the constraint update direction
over training epochs with the conjugate gradient method's iteration of 10 and 20, respectively.
The one with larger number of iteration has more constraint satisfaction since it has more accurate approximation.
(Best viewed in color)
}
\label{fig:performance_Additiona_Analysis_for_Grid_Task}
\end{figure}
\subsection{Comparison of Optimization Paths of PCPO with KL Divergence and $\normltwo$ Norm Projections}
\label{appendix:section_optimizationPath}
%

%
%
%
Theorem \ref{theorem:PCPO_converge} states that a stationary point of PCPO with KL divergence projection is different from the one of PCPO with $\normltwo$ norm projection. See Fig. \ref{fig:fixedPoint} for illustration. 
To compare both stationary points, we consider the following example shown in Fig. \ref{fig:OptPath}.
We \textit{maximize} a non-convex function $f(\vx)=\vx^T\text{diag}(\vy)\vx$ subject to the constraint $\vx^T\1\leq-1,$
where $\vy=[5,-1]^T,$ and $\1$ is an all-one vector.
An optimal solution to this constrained optimization problem is infinity.  
Fig. \ref{fig:OptPath}(a) shows the update direction that combines the objective and the cost constraint update directions for both projections.
It shows that PCPO with KL divergence projection has stationary points with $\vg\in-\va$ in the boundary of the constraint set (observe that the update direction is zero for PCPO with KL divergence projection at $\vx=[0.75,-1.75]^T, [0.25,-1.25]^T,$ and $[-0.25,-0.75]^T$), whereas PCPO with $\normltwo$ norm projection does not have stationary points in the boundary of the constraint set.
Furthermore, Fig. \ref{fig:OptPath}(b) shows the optimization paths for both projections with one initial starting point.
It shows that starting at the initial point $[0.5,-2.0]^T,$ PCPO with KL divergence projection with the initial point $[0.5,-2.0]^T$ converges to a local optimum, whereas $\normltwo$ norm projection converges to infinity.
However, the above example does not necessary means that PCPO with $\normltwo$ norm projection always find a better optimum.
For example, if the gradient direction of the objective is zero in the constraint set or in the boundary, then both projections may converge to the same stationary point.
%
%

%

%
%
%
%
%
%
%
%

\begin{figure}[t]
\centering
\includegraphics[width=0.5\linewidth]{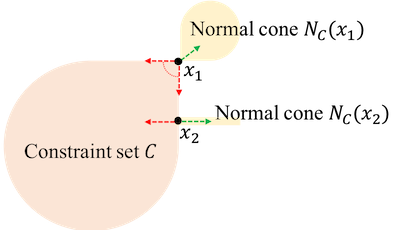}
\caption{
The semantic overview of stationery points of PCPO.
The red dashed lines are negative directions of normal cones, and the green dashed lines are objective update directions.
The objective update direction in an stationary point is belong to the negative normal cone.
}
\label{fig:fixedPoint}
\end{figure}

\begin{figure}[t]
\centering
\subfloat[Update direction]{%
\includegraphics[width=0.6\linewidth]{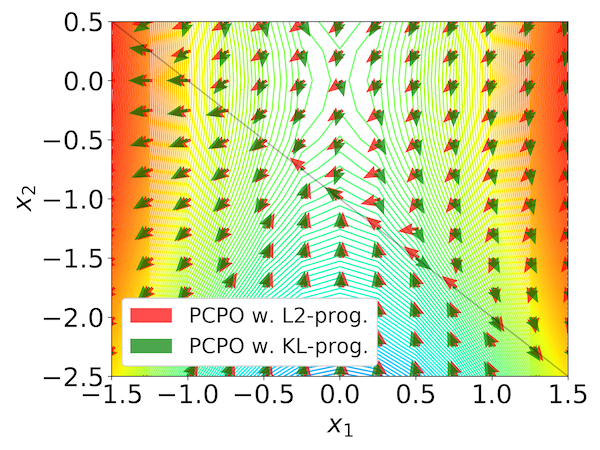}}%
\\
\subfloat[Optimization path]{%
\includegraphics[width=0.65\linewidth]{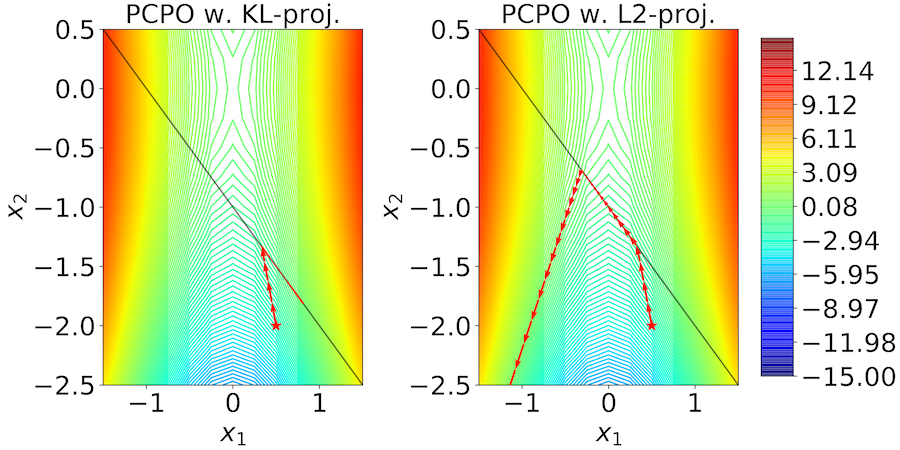}%
}
\caption{
The policy update direction that combines the objective and the constraint update directions of each point (top), and the optimization path of PCPO with KL divergence and $\normltwo$ norm projections with the initial point $[0.5,-2.0]^T$~(below).
The red star is the initial point,
the red arrows are the optimization paths,
and the region that is below to the black line is the constraint set.
We see that both projections converge to different solutions.
%
}
\label{fig:OptPath}
\end{figure}
\end{document}